\numberwithin{equation}{section}
   {\begin{center} \begin{tabular}{|@{\hspace{.15in}}c@{\hspace{.15in}}|}
                   \hline \\ \begin{minipage}[t]{\boxedparwidth}
                   \setlength{\parindent}{.25in}}%
   {\end{minipage} \\ \\ \hline \end{tabular} \end{center}}
\newtheorem{theorem}{Theorem}
\newtheorem{assumption}{Assumption}
\newtheorem{lemma}{Lemma}
\newcommand{\eonegl}{\widehat{\mathcal E}_1^\mathrm{GL}}
\newcommand{\bSigma}{{\bf \Sigma}}
\newcommand{\etwogl}{\widehat{\mathcal E}_2^\mathrm{GL}}
\newcommand{\eonesis}{\widehat{\mathcal E}_1^\mathrm{GRASS}}
\newcommand{\etwosis}{\widehat{\mathcal E}_2^\mathrm{GRASS}}
\newcommand{\ehg}{\widehat{\mathcal E}_{\gamma_n}}
\newcommand{\ehag}{\widehat{\mathcal E}_{a,\gamma_n}}
\newcommand{\maximize}{\mbox{maximize}}
\newcommand{\trace}{\mbox{trace}}
\begin{document}

\title{ {Sure  Screening for Gaussian Graphical Models}}
\author{Shikai Luo, Rui Song, Daniela Witten}
\date{\today}

\maketitle

\doublespacing

\begin{center}Abstract
\end{center}
We propose \textit{graphical sure screening}, or GRASS, a very simple and computationally-efficient screening procedure for recovering the structure of a Gaussian graphical model in the high-dimensional setting. The GRASS estimate of the conditional dependence graph is obtained by thresholding the elements of the sample covariance matrix. The proposed approach possesses the sure screening property: with very high probability, the GRASS estimated edge set contains the true edge set. Furthermore, with high probability, the size of the estimated edge set is controlled. We provide a choice of threshold for GRASS that can control the expected false positive rate.  We illustrate the performance of GRASS in a simulation study and on a gene expression data set, and show that in practice it performs quite competitively with more complex and computationally-demanding techniques for graph estimation.

\section{Introduction}

In recent years, graphical modeling has been a topic of great interest in both the scientific and statistical communities. 
Applications of graphical modeling are widespread, from computer vision to natural language processing to genomics. In particular, in genomics, graphical models have been extensively used to model gene regulatory networks,  composed of tens of thousands of genes. It is typically of interest to infer the structure of the graph based on hundreds, or at most thousands, of observations for which gene expression measurements are available.  Consequently, the setting is high-dimensional, in the sense that there are many more features than observations.

Consider the random vector $X=(X_1,\ldots,X_p)^T$, and the conditional dependence graph $\mathcal{G} = (\Gamma, \mathcal{E})$. Here $\Gamma = \{1,\dots,p\}$ is the set of nodes, and $\mathcal{E}$ is the set of edges in $\Gamma\times\Gamma$. A pair $(a,b)$ is contained in the edge set $\mathcal{E}$ if and only if $X_a$ is conditionally dependent on $X_b$, given all remaining variables $X_{\Gamma\backslash\{a,b\}} = \{ X_k; k \in \Gamma\backslash\{a,b\} \}$.

The conditional dependence graph takes  a particularly simple form if we suppose that $X\sim N({0},{\bf \Sigma})$, where  $\bf \Sigma$ is a non-singular covariance matrix.  In this setting,  a pair of variables is conditionally independent if and only if  the corresponding entry of the precision matrix $\bSigma^{-1}$ equals zero \citep{lauritzen1996graphical,mardia1980multivariate}. Consequently, in the Gaussian graphical model, recovering the edge set $\mathcal{E}$  is equivalent to recovering the sparsity pattern of the precision matrix ${\bf \Sigma}^{-1}$.

Recently, much attention has been devoted to the task of estimating and recovering the sparsity pattern of a large sparse precision matrix; we provide a brief review of the literature here.
A number of authors have considered an $\ell_1$-penalized likelihood approach in order to estimate a sparse precision matrix \citep{yuan2007model,friedman2008sparse,rothman2008sparse,ravikumar2009sparse}. To ameliorate the bias incurred by the use of an $\ell_1$ penalty, \citet{lam2009sparsistency} considered  the use of a non-convex SCAD penalty. 
Others have considered a neighborhood selection approach, which entails performing a sparse regression of each variable on all of the other variables in order to estimate the precision matrix \citep{meinshausen2006high,yuan2010high,cai2011constrained,cai2012estimating,sun2012sparse}.
 \citet{yuan2010high} considered a Dantzig-type neighborhood selection approach. 
   For many of the aforementioned approaches, statistical convergence results (in terms of various matrix norms) have been established for the high-dimensional setting. 

Although various computationally efficient algorithms for estimating a sparse precision matrix have been proposed \citep[see e.g.][]{friedman2008sparse,witten2011new}, the required computations can be burdensome  when the number of variables is in the tens of thousands,  or even higher. For example, the precision matrix for a problem with $p=25,000$ (the number of genes in the human genome) involves upwards of 300,000,000 parameters.  In such a setting, existing algorithms  tend to be infeasible.  We are thus motivated to consider a computationally-efficient screening approach for Gaussian graphical models that possesses desirable statistical properties.

In recent years, computationally simple variable screening approaches have gained popularity in the context of high-dimensional modeling. 
 \citet{fan2008sure} proposed the sure independence screening method for linear models. This approach possesses the sure screening property:  with probability going to one, all of the important variables will be selected.  \citet{fan2009ultrahigh} and \citet{fan2010sure} extended this approach to the context of generalized linear models.
Other marginal screening methods include tilting methods \citep{hall2009tilting}, generalized correlation screening \citep{hall2009using}, nonparametric screening \citep{fan2011nonparametric}, partial likelihood screening \citep{zhao2012principled}, and robust rank correlation based screening \citep{li2012feature,zhu2011model}.
Most of the existing screening methods aim to select variables by ranking utilities such as the Pearson's correlation between the marginal covariates and the response, where variables with strong marginal utilities are selected.

In this paper, we propose a novel screening procedure for recovering the structure of a Gaussian graphical model.  We call this approach \textit{graphical sure screening} (GRASS). Our approach is motivated by the fact that the $a$th column of the precision matrix ${\bf \Sigma}^{-1}$ can be obtained by regressing the $a$th feature onto the $p-1$ other features \citep{mardia1980multivariate}. This suggests that in order to estimate $\mathcal{E}_a$, the neighbourhood of the $a$th node, we can emulate the sure screening procedure of \citet{fan2008sure}  for linear models: we simply threshold the sample correlations of the $a$th feature with the $p-1$ other features. We show that  
 under certain simple assumptions, the set of nodes for which the sample correlation with the $a$th node exceeds some threshold is guaranteed to contain the true neighborhood, $\mathcal{E}_a$,  with very high probability. This property holds when the dimension $p$ grows as an  exponential function of  the  sample size $n$.
 Furthermore, we establish a surprising  connection between GRASS and existing approaches for sparse precision matrix estimation using an $\ell_1$-penalized log likelihood.

  As far as we know, this work is the first time that a sure screening procedure has been applied in an unsupervised context. The proposed method is conceptually very simple, and can be easily implemented in  very high dimensions.  In contrast to existing methods for estimating a sparse precision matrix, which typically require $\mathcal{O}(p^3)$ computations \citep{friedman2008sparse}, our procedure requires only $\mathcal{O}(p^2)$ operations, and hence can be easily scaled to large-$p$ settings.

The rest of this article is organized as follows. In Section~\ref{sec:grass}, we present the GRASS procedure. In Section~\ref{sec:theory}, we establish the theoretical properties for GRASS; these include the  sure screening property, size control of the selected edge set, and control of the theoretical false positive rate. A surprising connection between GRASS and existing $\ell_1$-penalized approaches for sparse precision matrix estimation is explored in Section~\ref{sec:l1connect}. Simulation studies are presented in Section~\ref{sec:sim}, and a real data application is  in Section~\ref{sec:real}. We close with a discussion in Section~\ref{sec:disc}.

\section{Graphical Sure Screening} \label{sec:grass}

Consider the random vector  $X =(X_1,\ldots,X_p)^T \sim N_p(0, {\bf \Sigma})$, where $\bf \Sigma$ has unit diagonals, i.e. $\mbox{E}(X_a^2)=1$ for all $a=1,\ldots,p$. The $n\times p$ data matrix ${\bf X} = \left({\bf X}_1, \cdots, {\bf X}_p\right)$ contains $n$ i.i.d. draws from $X$. Let $\gamma_n>0$ be some pre-specified threshold.

We propose to obtain a candidate edge set, $\ehg$, and a candidate neighbourhood for the $a$th node, $\ehag$, by thresholding the sample correlation matrix by $\gamma_n$. That is, we define
\begin{eqnarray}
  \ehg &=& \{(a,b): a<b, |\bm{X}_a^T\bm{X}_b|/n > \gamma_n\}, \label{g1}\\
  \ehag &=& \{b: b\neq a, |\bm{X}_a^T\bm{X}_b|/n > \gamma_n\}. \label{g2}
\end{eqnarray}
We refer to (\ref{g1}) and (\ref{g2}) as the \textit{graphical sure screening} (GRASS) estimates.

We will show in Section~\ref{sec:theory} that for an appropriate choice of $\gamma_n$,   ${\mathcal{E}}_{a}$ is contained in  $\ehag$ with very high probability, when $p$ grows exponentially with  $n$. We refer to this as the \textit{sure screening property} for the graphical model. This property holds even when the size of the selected neighbourhood is a polynomial order of the sample size, leading to a drastic decrease in the dimension. Furthermore, under certain conditions, GRASS can also control the  false positive rate.

\section{Theoretical Properties}\label{sec:theory}

Here we present some theoretical properties of the GRASS procedure. Proofs are in the Appendix. In what follows, we use the notation $\sigma_{ab} \equiv \mbox{E}(X_a X_b)$.

To begin, we introduce an assumption on the minimum correlation between two nodes connected by an edge.
\par
\begin{assumption}\label{a1} For some constants $C_1>0$ and $0 < \kappa < 1/2$, 
$$\mathop{\mathrm{min}}_{(a,b)\in \mathcal{E}}|\sigma_{ab}|\geq C_1n^{-\kappa}.$$ 
\end{assumption}

Assumption~\ref{a1} allows us to establish the sure screening property of GRASS, which is presented in Theorem~\ref{thm1}.

\begin{theorem}\label{thm1}
  Assume that Assumption~\ref{a1} holds, and that $\mathrm{log}(p) = C_3n^{\xi}$ for some constants $C_3>0$ and $\xi \in (0,1-2\kappa)$. {Let $\gamma_n = 2/3 C_1 n^{-\kappa}$}. Then, there exist constants $C_4$ and $C_5$ such that
  \begin{eqnarray*}
    P(\mathcal{E}\subseteq \ehg) & {\geq}& 1 - C_4\mathrm{exp}(-C_5n^{1-2\kappa}),~\mathrm{and}\\
    P(\mathcal{E}_a\subseteq \ehag) &{\geq}& 1 - C_4\mathrm{exp}(-C_5n^{1-2\kappa}).
    \end{eqnarray*}
\end{theorem}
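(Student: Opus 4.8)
The plan is to reduce the event $\{\mathcal{E}_a \subseteq \ehag\}$ to a statement about the deviation of each sample correlation from its population counterpart, and then apply a Gaussian concentration bound uniformly over the $p-1$ relevant pairs. First I would observe that if $(a,b)\in\mathcal{E}$, then by Assumption~\ref{a1} we have $|\sigma_{ab}| \geq C_1 n^{-\kappa}$, whereas the threshold is $\gamma_n = \tfrac{2}{3}C_1 n^{-\kappa}$; hence on the event where $\big| \bm{X}_a^T\bm{X}_b/n - \sigma_{ab}\big| \le \tfrac{1}{3}C_1 n^{-\kappa}$ simultaneously for all such $(a,b)$, the reverse triangle inequality forces $|\bm{X}_a^T\bm{X}_b|/n > \tfrac{2}{3}C_1 n^{-\kappa} = \gamma_n$, so $b\in\ehag$. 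Therefore it suffices to bound $P\big( \max_{(a,b)} \big| \bm{X}_a^T\bm{X}_b/n - \sigma_{ab}\big| > \tfrac{1}{3}C_1 n^{-\kappa}\big)$ by a union bound over at most $p$ (for $\ehag$) or $p^2$ (for $\ehg$) pairs.

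The core estimate is thus a tail bound for a single sample covariance $\widehat\sigma_{ab} = \frac{1}{n}\sum_{i=1}^n X_{ia}X_{ib}$ around $\sigma_{ab}$, where $(X_{ia},X_{ib})$ are i.i.d.\ bivariate Gaussian with unit variances. The standard device here is the polarization identity $X_{ia}X_{ib} = \tfrac{1}{4}\big[(X_{ia}+X_{ib})^2 - (X_{ia}-X_{ib})^2\big]$, which writes $\widehat\sigma_{ab}-\sigma_{ab}$ as a difference of two centered averages of squared Gaussians, i.e.\ centered $\chi^2$-type variables. For such sub-exponential sums there is a Bernstein-type inequality: for $t$ in a bounded range (here $t = \tfrac{1}{3}C_1 n^{-\kappa}\to 0$, so we are comfortably in the Gaussian/small-deviation regime of Bernstein's inequality), one gets $P\big(|\widehat\sigma_{ab}-\sigma_{ab}| > t\big) \le c_1 \exp(-c_2 n t^2)$ for constants depending only on the (bounded) variances. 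Plugging in $t \asymp n^{-\kappa}$ gives a per-pair bound of order $\exp(-c_2 C_1^2 n^{1-2\kappa}/9)$.

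Combining via the union bound, $P\big(\mathcal{E}_a \not\subseteq \ehag\big) \le p\, c_1 \exp(-c_2' n^{1-2\kappa})$, and similarly with $p^2$ for $\ehg$. Since $\log p = C_3 n^\xi$ with $\xi < 1-2\kappa$, the factor $p^2 = \exp(2C_3 n^\xi)$ is dominated by $\exp(c_2' n^{1-2\kappa})$ for $n$ large; absorbing the lower-order exponential and adjusting constants yields $P(\mathcal{E}\subseteq \ehg) \ge 1 - C_4\exp(-C_5 n^{1-2\kappa})$, and likewise for $\ehag$, with $C_5$ any constant strictly less than $c_2' = c_2 C_1^2/9$ and $C_4$ chosen to absorb the sub-exponential slack for small $n$.

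I expect the main obstacle to be purely technical rather than conceptual: getting the Bernstein-type tail bound for $\widehat\sigma_{ab}-\sigma_{ab}$ with explicit, correlation-uniform constants. One must check that the sub-exponential (Orlicz $\psi_1$) norms of $(X_{ia}\pm X_{ib})^2$ are bounded by an absolute constant regardless of $\sigma_{ab}\in[-1,1]$ — which holds because $\mathrm{Var}(X_{ia}\pm X_{ib}) = 2(1\pm\sigma_{ab}) \le 4$ — and then verify that $t\asymp n^{-\kappa}$ lies in the regime where the quadratic term of Bernstein's bound dominates, so that the exponent is genuinely of order $nt^2 \asymp n^{1-2\kappa}$ and not merely of order $nt$. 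The condition $\kappa < 1/2$ is exactly what makes $n^{1-2\kappa}\to\infty$, and $\xi < 1-2\kappa$ is exactly what lets the union-bound factor be swallowed; everything else is bookkeeping with constants.
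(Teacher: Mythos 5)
Your proposal is correct and follows essentially the same route as the paper's proof: the triangle-inequality reduction under Assumption~\ref{a1}, the polarization identity writing $X_{ia}X_{ib}$ as a difference of centered $\chi^2$-type averages, a Bernstein-type bound (the paper verifies the required moment condition for centered $\chi^2_1$ variables explicitly, where you invoke $\psi_1$-norm bounds uniform in $\sigma_{ab}$), and a union bound over at most $p^2$ pairs absorbed by $\log p = C_3 n^\xi$ with $\xi < 1-2\kappa$. No gaps of substance; the remaining details (boundary strict vs.\ non-strict inequalities, explicit constants) are the bookkeeping you anticipated.
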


\bigskip
Theorem~\ref{thm1} guarantees that with very high probability, the true edge set is contained in $\ehg$, the edge set estimate from the GRASS procedure. In other words, with very high probability, GRASS will not result in false negatives. 
 This raises the following question: how  large is $\ehag$,  the estimated neighbourhood  for the $a$th node? In order to answer this question, we must first introduce an additional assumption.

\begin{assumption} \label{a2} There exist constants $\tau\geq0$ and $C_2>0$ such that
$$\lambda_{\mathrm{max}}(\bSigma)\leq C_2n^{\tau},$$ where $\lambda_{\mathrm{max} }(\bSigma)$ is the maximal eigenvalue of  $\bSigma$.\end{assumption}
Assumption \ref{a2} indicates that the largest eigenvalue of the population covariance matrix $\bSigma$ is allowed to diverge as $n$ grows; however, it cannot diverge too quickly. This condition naturally appears in many applications. For example, it holds for the covariance matrix of a stationary time series \citep{fan2008sure}.

We now present Theorem~\ref{thm2}, which allows us to control the size of $\ehag$. 
\begin{theorem}\label{thm2}
 {Let $\gamma_n = 2/3 C_1 n^{-\kappa}$}.  Under Assumptions \ref{a1}-\ref{a2},  if $log(p)=C_3 n^{\xi}$, for $\xi \in (0,1-2\kappa)$,  then
  \begin{eqnarray*}
    P\left[|\widehat{\mathcal{E}}_{a,\gamma_n}|\leq O(n^{2\kappa+\tau})\right] \geq 1 - C_4\mathrm{exp}(-C_5n^{1-2\kappa}),
  \end{eqnarray*}
  where the constants $C_4$ and $C_5$ are as in Theorem \ref{thm1}.
\end{theorem}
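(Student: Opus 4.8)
The plan is to route the entire argument through the high-probability event already constructed in the proof of Theorem~\ref{thm1}, thereby replacing the random set $\ehag$ by a deterministic set whose cardinality is controlled by Assumption~\ref{a2}. Write $\hat\sigma_{ab} = \bm X_a^T\bm X_b/n$, and recall that the proof of Theorem~\ref{thm1} proceeds by exhibiting an event $\mathcal A$ with $P(\mathcal A)\ge 1-C_4\exp(-C_5 n^{1-2\kappa})$ on which $|\hat\sigma_{ab}-\sigma_{ab}| \le \tfrac13 C_1 n^{-\kappa}$ simultaneously for all pairs $(a,b)$. The first step is to note that on $\mathcal A$, if $b\in\ehag$ then $|\hat\sigma_{ab}|>\gamma_n = \tfrac23 C_1 n^{-\kappa}$, so $|\sigma_{ab}| \ge |\hat\sigma_{ab}| - \tfrac13 C_1 n^{-\kappa} > \tfrac13 C_1 n^{-\kappa}$; hence on $\mathcal A$,
\[
\ehag \;\subseteq\; S_a \;:=\; \big\{\, b\neq a : |\sigma_{ab}| > \tfrac13 C_1 n^{-\kappa}\,\big\}.
\]

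It then remains to bound $|S_a|$ deterministically. Since $\bSigma$ is positive definite with unit diagonal, writing $\bm e_a$ for the $a$th standard basis vector and invoking Assumption~\ref{a2},
\[
\sum_{b=1}^{p}\sigma_{ab}^2 \;=\; \bm e_a^T\bSigma^2\bm e_a \;=\; \big(\bSigma^{1/2}\bm e_a\big)^T\bSigma\big(\bSigma^{1/2}\bm e_a\big) \;\le\; \lambda_{\mathrm{max}}(\bSigma)\,\bm e_a^T\bSigma\bm e_a \;=\; \lambda_{\mathrm{max}}(\bSigma) \;\le\; C_2 n^{\tau}.
\]
Each $b\in S_a$ contributes more than $\tfrac19 C_1^2 n^{-2\kappa}$ to the left-hand side, so $|S_a|\,\tfrac19 C_1^2 n^{-2\kappa} < C_2 n^{\tau}$, i.e.\ $|S_a| \le (9C_2/C_1^2)\,n^{2\kappa+\tau}$. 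Combining with the first step, on $\mathcal A$ we have $|\ehag| \le (9C_2/C_1^2)\,n^{2\kappa+\tau} = O(n^{2\kappa+\tau})$, whence $P\big(|\ehag|\le O(n^{2\kappa+\tau})\big) \ge P(\mathcal A) \ge 1-C_4\exp(-C_5 n^{1-2\kappa})$.

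The one point that requires care is the decision to control the \emph{population} quantity $\sum_b\sigma_{ab}^2=(\bSigma^2)_{aa}$ rather than its sample analogue $\sum_b\hat\sigma_{ab}^2=(\hat\bSigma^2)_{aa}$. A direct bound $\sum_b\hat\sigma_{ab}^2\le\lambda_{\mathrm{max}}(\hat\bSigma)\,\hat\sigma_{aa}$ followed by a concentration estimate on $\lambda_{\mathrm{max}}(\hat\bSigma)$ is tempting, but it fails here: when $\log p$ grows polynomially in $n$, the top eigenvalue of $\hat\bSigma$ is of order $p/n$ and the bound becomes vacuous. Conditioning on the event $\mathcal A$ from Theorem~\ref{thm1} circumvents this entirely, since it reduces the count to the well-behaved matrix $\bSigma$; after that, the argument uses only the elementary inequality $\bm e_a^T\bSigma^2\bm e_a\le\lambda_{\mathrm{max}}(\bSigma)\,\bm e_a^T\bSigma\bm e_a$ together with a one-line counting argument, and introduces no new probabilistic estimates beyond those already needed for Theorem~\ref{thm1}.
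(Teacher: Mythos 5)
Your proof is correct, and its overall architecture matches the paper's: condition on a high-probability event on which every sample covariance involving node $a$ is within $\tfrac13 C_1 n^{-\kappa}$ of its population value (the paper's $\mathcal{T}_{a,\gamma_n}$, whose probability is bounded exactly as in Theorem~\ref{thm1}), deduce $\ehag\subseteq S_a$, and then bound $|S_a|$ deterministically through $\lambda_{\mathrm{max}}(\bSigma)$. Where you genuinely differ is in the deterministic counting step. The paper invokes its Lemma~\ref{lemma3}, phrased via the regression representation $X_a = X_{-a}^T\beta+\epsilon$: the covariance vector $(\sigma_{ab})_{b\neq a}$ is written as $\Sigma\beta$ and bounded by $\|\Sigma\beta\|_2^2\le\lambda_{\mathrm{max}}(\Sigma)\,\beta^T\Sigma\beta\le\lambda_{\mathrm{max}}(\Sigma)$, using $\mathrm{Var}(X^T\beta)\le \mathrm{Var}(Y)=1$. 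You bound the same quantity $\sum_b\sigma_{ab}^2=(\bSigma^2)_{aa}$ directly by $\lambda_{\mathrm{max}}(\bSigma)\,\sigma_{aa}=\lambda_{\mathrm{max}}(\bSigma)$, exploiting the unit-diagonal normalization; both routes reduce to the inequality $v^T\bSigma^2 v\le\lambda_{\mathrm{max}}(\bSigma)\,v^T\bSigma v$, with $v=\beta$ in the paper and $v=\bm e_a$ in your argument. Your version is more elementary — it dispenses with the regression detour and with Lemma~\ref{lemma3} altogether — and yields the same constant $9C_2/C_1^2$; the paper's lemma buys a statement in the more general language of marginal screening for linear models, in the spirit of sure independence screening. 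One small presentational caveat: the proof of Theorem~\ref{thm1} only union-bounds over pairs in $\mathcal{E}$, so the uniform event $\mathcal{A}$ over \emph{all} pairs is not literally exhibited there; it does follow from the same per-pair exponential inequality by a union bound over at most $p^2$ pairs, which is harmless since $\log p = C_3 n^{\xi}$ with $\xi<1-2\kappa$ (and in fact only the $p-1$ pairs involving node $a$ are needed, as in the paper's $\mathcal{T}_{a,\gamma_n}$), but you should state that union bound rather than attribute the event to Theorem~\ref{thm1}'s proof.
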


{Next we propose a choice of $\gamma_n$ that allows us to control the {expected} false positive rate at a prespecified value. The false positive rate is defined as}
$$\frac{|\ehg \cap \mathcal{E}^c |}{|\mathcal{E}^c|}.$$ 
{We would like the false positive rate to decrease to 0 as $p_n$ increases with $n$. As in \cite{zhao2012principled}, we do this by first fixing the number of false positives $f$ that we are willing to tolerate; this corresponds to a false positive rate of $f / |\mathcal{E}^c|$. }
 In order to control the {expected} false positive rate, we introduce an additional assumption.
\begin{assumption}\label{a3}
For the same $\xi$ as in Theorem~\ref{thm1},
$$ \mathop{\mathrm{max}}_{(a,b)\not\in \mathcal{E}}|\sigma_{ab}| = o(n^{-\frac{1-\xi}{2}}).$$  
\end{assumption}

\begin{theorem}\label{thm4}
 Under Assumptions \ref{a1}-\ref{a3}, if $\log(p)=C_3 n^\xi$ for $\xi$ as in Theorem~\ref{thm1}, then 
   we can control the asymptotic {expected} false positive rate at $f / |\mathcal{E}^c|$  by choosing $\gamma_n = \Phi^{-1} (1- \frac{f}{p(p-1)}) / \sqrt{n}$. Furthermore, with this threshold, the sure screening property of Theorem~\ref{thm1} still holds.
\end{theorem}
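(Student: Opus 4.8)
\medskip\noindent\textbf{Plan of proof.} The two claims are addressed in turn.

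\emph{Step 1: expected false positive rate.} By linearity of expectation,
\begin{equation*}
  \mbox{E}\!\left[\frac{|\ehg\cap\mathcal{E}^c|}{|\mathcal{E}^c|}\right]=\frac{1}{|\mathcal{E}^c|}\sum_{\substack{a<b\\(a,b)\notin\mathcal{E}}}P\!\left(|\bm{X}_a^T\bm{X}_b|/n>\gamma_n\right),
\end{equation*}
so it is enough to bound the sum on the right by $f(1+o(1))$. Fix a non-edge $(a,b)$ and put $W_{ab}=\bm{X}_a^T\bm{X}_b/n$, with $\mbox{E}(W_{ab})=\sigma_{ab}$ and, because $(X_a,X_b)$ is bivariate Gaussian with unit variances, $\mbox{Var}(nW_{ab})=n(1+\sigma_{ab}^2)$. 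Two facts drive the argument. First, the prescribed threshold obeys $\gamma_n=\Phi^{-1}(1-f/(p(p-1)))/\sqrt n\asymp n^{-(1-\xi)/2}$, since $\Phi^{-1}(1-u)\sim\sqrt{2\log(1/u)}$ and $\log p=C_3n^{\xi}$. Second, Assumption~\ref{a3} gives $\max_{(a,b)\notin\mathcal{E}}|\sigma_{ab}|=o(n^{-(1-\xi)/2})=o(\gamma_n)$, so that uniformly over non-edges the event $\{|W_{ab}|>\gamma_n\}$ forces $\sqrt n\,|W_{ab}-\sigma_{ab}|/\sqrt{1+\sigma_{ab}^2}$ to exceed $\sqrt n\,\gamma_n\,(1+o(1))$.

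\emph{Step 2: the tail approximation.} I would then apply a Cram\'er-type moderate deviation expansion to the i.i.d.\ sum $\sum_{i=1}^n(X_{ia}X_{ib}-\sigma_{ab})$: each summand is a product of jointly Gaussian variables, with moment generating function $(1-2t\sigma_{ab}-t^2(1-\sigma_{ab}^2))^{-1/2}$ finite near the origin (and, when convenient, explicitly invertible by a saddlepoint computation), so for $z_n$ in the relevant range $z_n\asymp\sqrt{\log p}\asymp n^{\xi/2}=o(\sqrt n)$ the standardized tail $P(\sqrt n(W_{ab}-\sigma_{ab})/\sqrt{1+\sigma_{ab}^2}>z_n)$ is $(1-\Phi(z_n))(1+o(1))$. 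Combining this with Step 1 and the definition of $\gamma_n$ yields $P(|W_{ab}|>\gamma_n)\le 2(1-\Phi(\sqrt n\,\gamma_n))(1+o(1))=\tfrac{2f}{p(p-1)}(1+o(1))$, uniformly over non-edges; summing over the at most $\binom p2=p(p-1)/2$ such pairs gives $\sum P(|W_{ab}|>\gamma_n)\le f(1+o(1))$, and dividing by $|\mathcal{E}^c|$ delivers the asymptotic bound $f/|\mathcal{E}^c|$ on the expected false positive rate. The main obstacle here is making the moderate deviation equivalence \emph{uniform} over all non-edges when the normalized threshold $\sqrt n\,\gamma_n$ diverges like $n^{\xi/2}$ (which approaches $n^{1/2}$ as $\xi\uparrow 1-2\kappa$): this is precisely where Assumption~\ref{a3} is indispensable, since it keeps the per-pair bias $\sigma_{ab}$ of strictly smaller order than $\gamma_n$, so that the effective shift in the threshold costs only a factor $1+o(1)$ rather than inflating the tail by a power of $p$.

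\emph{Step 3: sure screening persists.} Since $\xi\in(0,1-2\kappa)$ we have $\kappa<(1-\xi)/2$, hence $\gamma_n\asymp n^{-(1-\xi)/2}=o(n^{-\kappa})$; in particular, for $n$ large, $\gamma_n\le\tfrac12 C_1 n^{-\kappa}\le\tfrac12|\sigma_{ab}|$ for every edge $(a,b)$, by Assumption~\ref{a1}. Then $\{|\bm{X}_a^T\bm{X}_b|/n\le\gamma_n\}\subseteq\{|\bm{X}_a^T\bm{X}_b/n-\sigma_{ab}|\ge\tfrac12 C_1 n^{-\kappa}\}$, and the concentration bound for $|\bm{X}_a^T\bm{X}_b/n-\sigma_{ab}|$ already used in the proof of Theorem~\ref{thm1} makes this probability at most $C\exp(-C'n^{1-2\kappa})$ for each edge. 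A union bound over the at most $p$ edges at node $a$ (resp.\ the at most $p^2$ edges of $\mathcal{E}$), together with $\log p=C_3n^{\xi}$ and $\xi<1-2\kappa$ so that $n^{1-2\kappa}$ dominates $\log p$, recovers $P(\mathcal{E}_a\subseteq\ehag)\ge 1-C_4\exp(-C_5n^{1-2\kappa})$ and $P(\mathcal{E}\subseteq\ehg)\ge 1-C_4\exp(-C_5n^{1-2\kappa})$ after adjusting constants.
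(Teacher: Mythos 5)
Your argument follows essentially the same route as the paper's: the same decomposition of the expected false positive rate into per-pair tail probabilities, the same standardization of $\bm{X}_a^T\bm{X}_b/n$ (mean $\sigma_{ab}$, variance $(1+\sigma_{ab}^2)/n$), the same use of Assumption~\ref{a3} to make $\sqrt{n}\,\sigma_{ab}$ negligible relative to $\sqrt{n}\,\gamma_n \asymp n^{\xi/2}$, and, for the second claim, the same key observation that the new threshold is eventually below the Theorem~\ref{thm1} threshold $\tfrac{2}{3}C_1 n^{-\kappa}$ (the paper verifies this via the bound $1-\Phi(x)\le \tfrac{1}{\sqrt{2\pi}}x^{-1}e^{-x^2/2}$ and then simply inherits Theorem~\ref{thm1}, whereas you use $\Phi^{-1}(1-u)\sim\sqrt{2\log(1/u)}$ and rerun the Bernstein/union-bound step; these are cosmetic differences).

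The one substantive point concerns the tail-approximation step. The paper passes from the exact probability $P(|\bm{X}_a^T\bm{X}_b/n|>\gamma_n)$ to Gaussian tails by an unadorned ``$\asymp$'' after citing asymptotic normality; you correctly recognize that, because $\sqrt{n}\gamma_n\to\infty$, this requires a moderate-deviation argument, and you invoke a Cram\'er-type expansion. That is the right instinct, but your stated range is too generous: the clean equivalence $P(\cdot>z_n)=(1-\Phi(z_n))(1+o(1))$ for i.i.d.\ sums with a finite moment generating function holds in general only for $z_n=o(n^{1/6})$ (i.e.\ $\xi<1/3$ here); for larger $z_n$ the ratio carries the Cram\'er-series factor $\exp\{(z_n^3/\sqrt{n})\lambda(z_n/\sqrt{n})\}$. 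Assumption~\ref{a3} does help, since the third cumulant of $X_{ia}X_{ib}$ is of order $\sigma_{ab}$ and thus kills the leading term, but the fourth-cumulant contribution is of order $z_n^4/n\asymp n^{2\xi-1}$, so the uniform $(1+o(1))$ equivalence you assert does not follow from the cited result once $\xi\ge 1/2$, while the theorem permits any $\xi<1-2\kappa$. Since the paper's own proof treats this step only as an asymptotic equivalence, your write-up is not behind the paper's level of rigor---if anything it is more explicit---but if you want the moderate-deviation route to be airtight you would need either to restrict $\xi$ or to carry the Cram\'er correction terms explicitly and show they are negligible (or are absorbed into an upper bound) in the full range of $\xi$.
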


\section{Connection with the Graphical Lasso}\label{sec:l1connect}

In recent years, many quite sophisticated approaches for obtaining a sparse estimate of $\bSigma^{-1}$ have been proposed. Perhaps the best-known among these is the \emph{graphical lasso} \citep{friedman2008sparse,yuan2007model}, which is the solution to the optimization problem
\begin{equation}
\maximize_{\bf \Theta}
 \{ \log \det {\bf \Theta} - \trace ( ({\bf X}^T {\bf X}/n)  {\bf \Theta}) - \lambda \sum_{i \neq j} | \Theta_{ij}| \}.
\label{eq:gl}
\end{equation} 
Recently, \citet{witten2011new} and \citet{mazumder2012exact} established a surprising result: the connected components of the graphical lasso estimator are exactly the same as the connected components that result from hard-thresholding the matrix ${\bf X}^T {\bf X}/n$ by $\lambda$. In other words, \emph{the connected components of the graphical lasso estimator are the same as the connected components of the GRASS edge set estimate, $\ehg$,} when $\gamma_n=\lambda$. 

Does this mean that GRASS and the graphical lasso are identical? Not quite. Though the connected components of the graphical lasso and of GRASS are the same, their entire sparsity patterns are, in general, \emph{not}  the same. 
 The graphical lasso can be thought as a two-stage procedure, in which we first perform GRASS with $\gamma_n=\lambda$, and then perform a smaller graphical lasso problem on each connected component of $\ehg$. 
 
Under a set of assumptions explored by \citet{ravikumar2011high}, the graphical lasso is known to be model selection consistent. Since the connected components of GRASS and the connected components of the graphical lasso are the same provided that $\gamma_n=\lambda$, this means that the model selection consistency results of \citet{ravikumar2011high} are inherited by the connected components of GRASS. In other words, the connected components of GRASS are selected consistently. 

\section{Simulation Studies}\label{sec:sim}

 \subsection{Data Generation} \label{sec:datagen}
Let $p$ denote the number of features, and $n$ the number of observations. We considered three ways of generating the edge set $\mathcal{E}$:
 \begin{list}{}{}
 \item[\emph{Simulation A: A sparse graph.}] For all $i<j$, we set $(i,j) \in \mathcal{E}$ with probability $0.01$.
       \item[\emph{Simulation B: A  graph with ten densely connected components.}] We partitioned the $p$ features into $10$ equally-sized and non-overlapping sets: $C_1 \cup C_2 \cup \ldots \cup C_{10} = \{1,\ldots,p \}$, $|C_k|=p/10$, $C_k \cap C_j = \emptyset$.  
    For all $i \in C_k, j \in C_k, i<j$, we set $(i,j) \in \mathcal{E}$. 
    \item[\emph{Simulation C: A banded graph.}] For $|i-j| \leq 2$ we set $(i,j) \in \mathcal{E}$. Otherwise, $(i,j) \notin \mathcal{E}$.
 \end{list}
 Once the edge set $\mathcal{E}$ was generated, we created a precision matrix via the following steps:
 \begin{list}{}{}
 \item{Step 1:} We generated a $p \times p$ matrix $\bf A$, where 
 $$ A_{ij} = A_{ji} = \begin{cases} 1 & \mathrm{\; for \;} i=j  \\ \mathrm{Unif}[-0.3,0.7] & \mathrm{\; for \;} (i,j) \in \mathcal{E}  \\ 0 & \mathrm{\; otherwise} \end{cases}.$$
 \item{Step 2:} We created a positive definite matrix ${\bf \Sigma}^{-1}$:
  $${\bf \Sigma}^{-1} = {\bf A} + (0.1 - \lambda_{\min}({\bf A}))  {\bf I},$$
 where $\lambda_{\min}(\bf A)$ denotes the smallest eigenvalue of $\bf A$, and $\bf I$ denotes the $p \times p$ identity matrix.
 \end{list}
 Then the covariance matrix $\bf \Sigma$ was rescaled to have diagonal elements equal to 1.
  Finally, we generated $n$ observations i.i.d. from a $N(0, {\bf \Sigma})$ distribution.

\subsection{Control of False Positive Rate}

 Theorem~\ref{thm4} states that under certain conditions, performing GRASS with 
 $\gamma _n= \frac{1}{\sqrt{n}} \Phi^{-1}(1-q/2)$ leads to control of the  expected false positive rate (FPR) at level  $q \equiv f/|\mathcal{E}^c|$.  We now investigate the extent to which GRASS controls the FPR in practice. 
Results for Simulations A-C are shown in Table~\ref{tab:fprfnr}.

Assumption~\ref{a3}, required for Theorem~\ref{thm4} to hold, states that $\max_{(a,b) \notin \mathcal{E}} |\sigma_{ab}| \rightarrow 0$ as $n \rightarrow \infty$. Simulation B satisfies this assumption,  since ${\bf \Sigma}^{-1}$ is block diagonal with completely dense blocks (and thus the same is true of $\bf \Sigma$). As expected, the FPR is controlled successfully in Simulation B (Table~\ref{tab:fprfnr}).

However, Assumption~\ref{a3} seems not to be satisfied by Simulations A and C. But Table~\ref{tab:fprfnr} reveals that the FPR is approximately controlled in these settings, especially for larger values of $q$.
How can this be?

In order to investigate this, we consider the off-diagonal elements of ${\bf \Sigma}^{-1}$ and $\bf \Sigma$ under Simulations A-C. These are displayed in Figure~\ref{fig:covprec}. We see that even for Simulations A and C, the vast majority of the large off-diagonal elements of ${\bf \Sigma}$ correspond to non-zero elements of ${\bf \Sigma}^{-1}$.

Furthermore, in Simulation A, there is  a very pronounced relationship between the values of the non-zero elements of ${\bf \Sigma}^{-1}$, and the corresponding values of $\bf \Sigma$. This is the case because in Simulation A,  ${\bf \Sigma}^{-1}$ was generated to be so sparse (Section~\ref{sec:datagen}) that, with high probability, a given column of ${\bf \Sigma}^{-1}$ contains no more than one non-zero off-diagonal element. Consequently, ${\bf \Sigma}^{-1}$ is (approximately) a block-diagonal matrix with blocks containing no more than two features. And consequently the sparsity patterns of ${\bf \Sigma}^{-1}$ and ${\bf \Sigma}$ are almost  identical. Furthermore, there is a simple monotone relationship between most of the non-zero elements of the two matrices, which can be easily derived using the standard formula for the inverse of a $2 \times 2$ matrix.

\begin{figure}[h!]
  \centering
  Simulation A \hspace{29mm} Simulation B \hspace{29mm} Simulation C \\
  \vspace{-4mm}
    \includegraphics[width=0.32\textwidth]{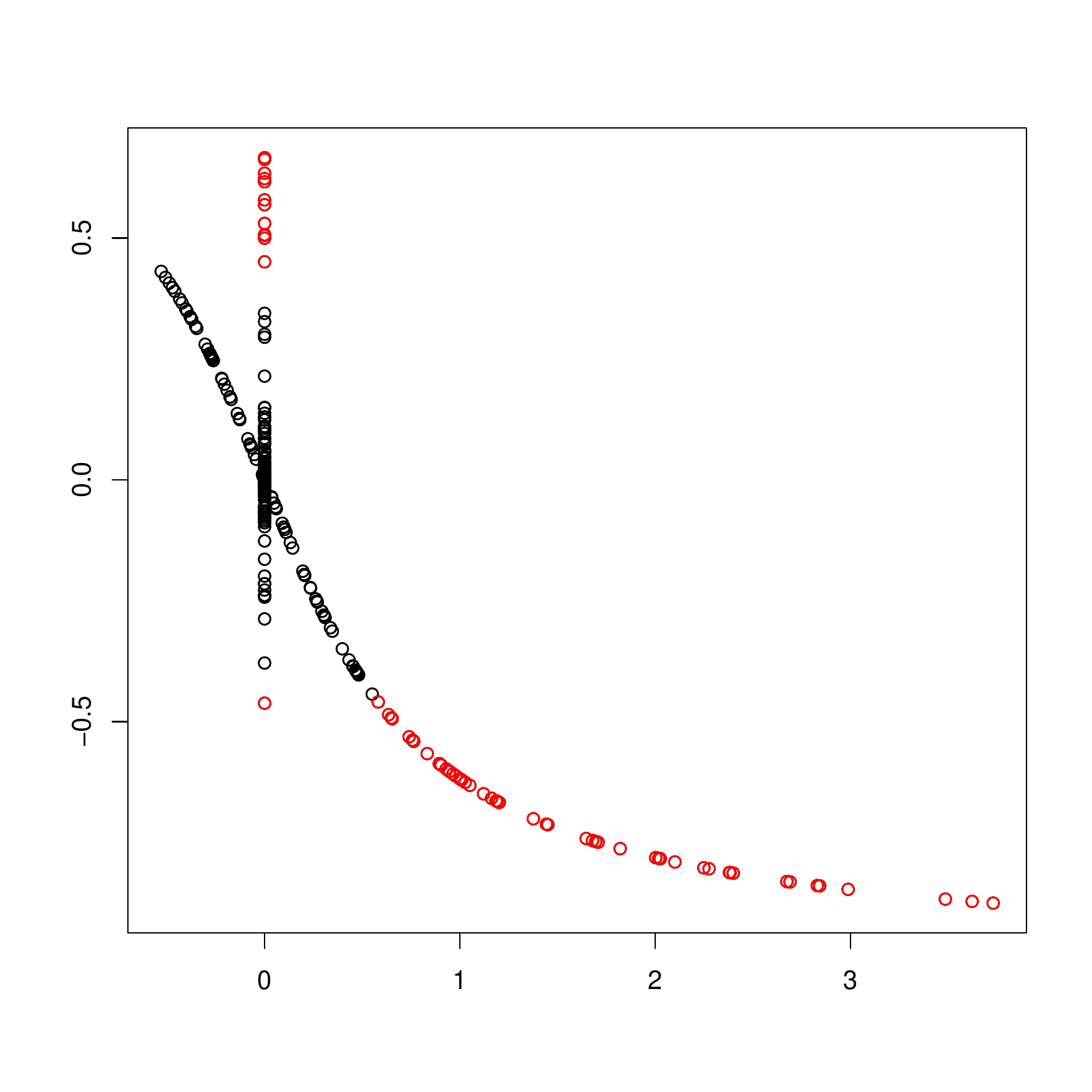}
        \includegraphics[width=0.32\textwidth]{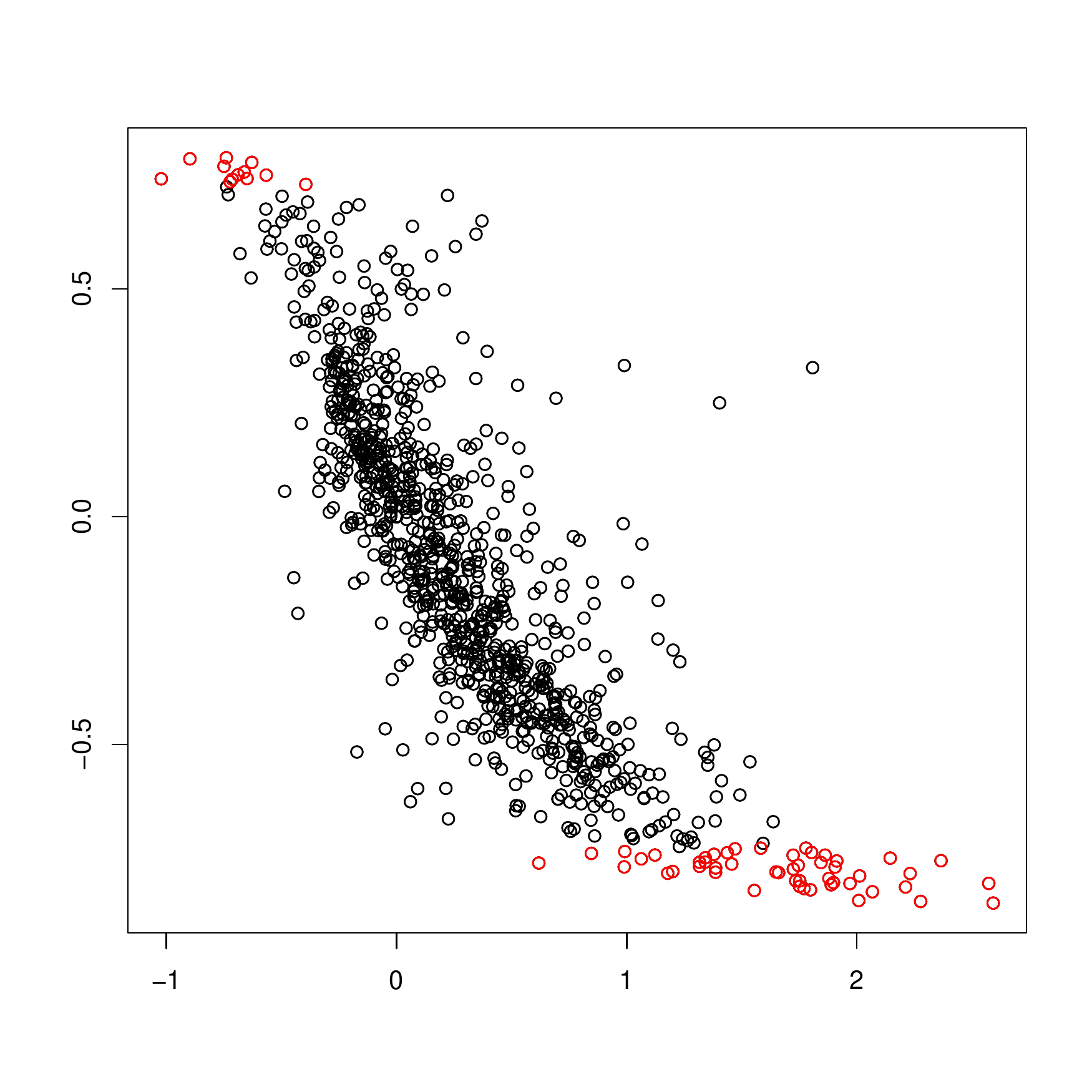}
    \includegraphics[width=0.32\textwidth]{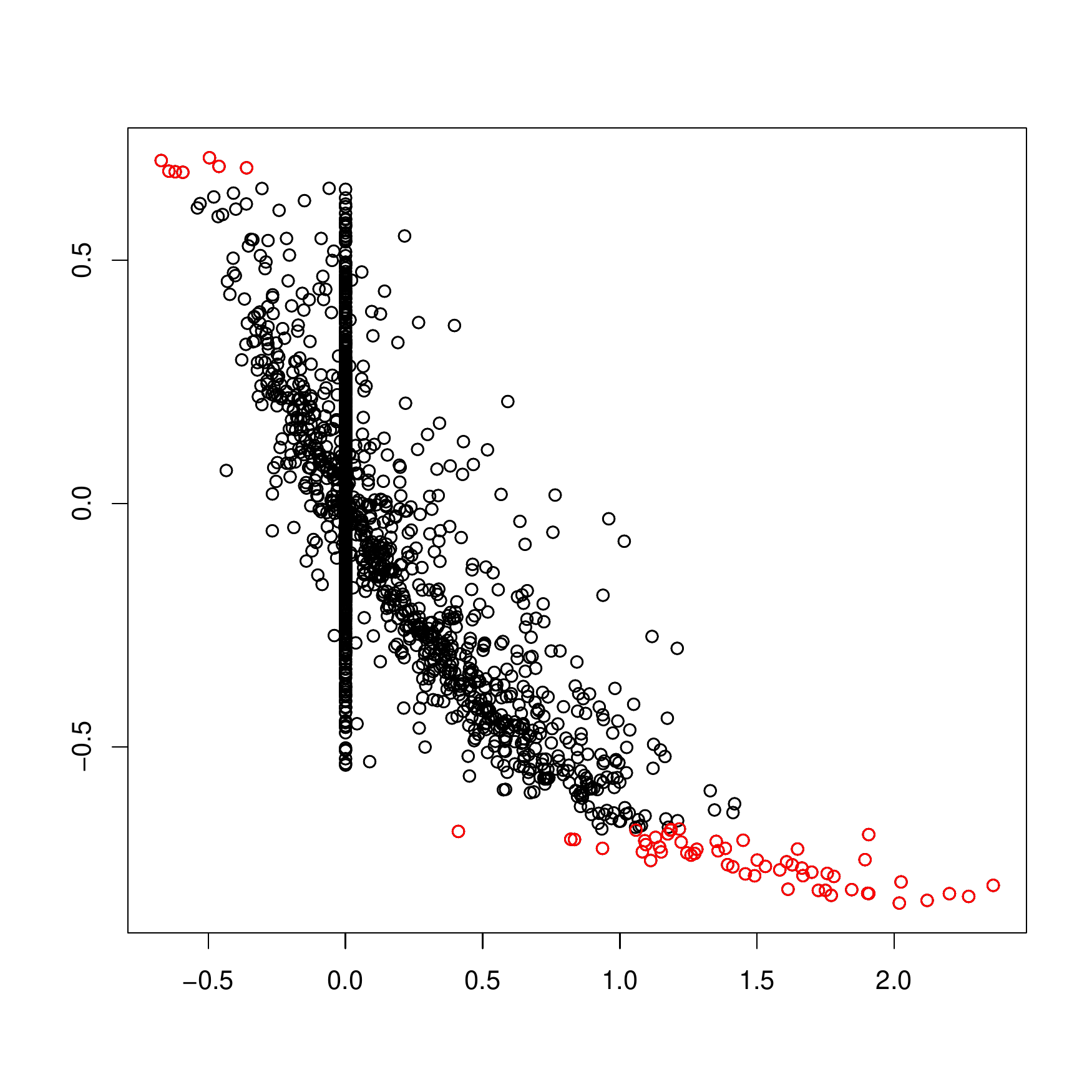}
     \caption{   \label{fig:covprec} For Simulations A-C with $n=100$ and $p=50$, the off-diagonal elements of ${\bf \Sigma}^{-1}$ (\emph{x-axis}) and $\bf \Sigma$ (\emph{y-axis}) are shown. The $0.5$\% of largest absolute off-diagonal elements of $\bf \Sigma$ are shown in red; the rest are in black. For all three setups, the vast majority of large off-diagonal elements of $\bf \Sigma$ correspond to non-zero elements of ${\bf \Sigma}^{-1}$. The pronounced relationships seen for Simulation A are due to the extreme sparsity of ${\bf \Sigma}^{-1}$, as is discussed in the text.}
\end{figure}

\begin{table}[h!]
  \begin{center}
    \begin{tabular}{cc|c|ccc|ccc|ccc|}
    \hline
&&&    \multicolumn{3}{|c|}{Simulation A}& \multicolumn{3}{|c|}{Simulation B}& \multicolumn{3}{|c|}{Simulation C}\\
 \cline{4-12} 
$n$	 &$p$	 &$q$	 &$|\ehg|$	 &FPR	 &FNR	 &$|\ehg|$	 &FPR	 &FNR	 &$|\ehg|$	 &FPR	 &FNR	 \\
\hline
\multirow{7}{*}{100} & \multirow{7}{*}{50}    &1e-04  &14.504         &0.001  &0.512  &69.816         &0      &0.652  &80.696         &0.008  &0.674  \\
  &  &0.001  &19.664         &0.003  &0.449  &88.496         &0.001  &0.572  &108.496        &0.013  &0.592  \\
  &  &0.01   &46.36  &0.013  &0.353  &131.688        &0.011  &0.464  &168.016        &0.03   &0.485  \\
  &   &0.1    &268.88         &0.103  &0.23   &368.064        &0.102  &0.305  &424.048        &0.13   &0.322  \\
  &  &0.2    &513.632        &0.204  &0.186  &603.432        &0.201  &0.239  &664.824        &0.231  &0.255  \\
  &   &0.3    &752.048        &0.302  &0.148  &834.808        &0.299  &0.194  &893.064        &0.328  &0.207  \\
  &   &0.5    &1236.6         &0.501  &0.094  &1296.344       &0.499  &0.126  &1341.68        &0.52   &0.135  \\
\hline
\multirow{7}{*}{100} & \multirow{7}{*}{200}     &1e-04  &150.664        &0.001  &0.73   &510.384        &0      &0.867  &247.328        &0.001  &0.741  \\
&   &0.001  &252.704        &0.003  &0.651  &798.528        &0.001  &0.802  &383.824        &0.003  &0.656  \\
&    &0.01   &737.752        &0.014  &0.541  &1572.592       &0.011  &0.691  &907.264        &0.014  &0.54   \\
&    &0.1    &4499.056       &0.108  &0.363  &5599.04        &0.101  &0.485  &4661.928       &0.107  &0.361  \\
&    &0.2    &8465.392       &0.208  &0.286  &9539.712       &0.2    &0.388  &8608.848       &0.206  &0.284  \\
&    &0.3    &12405.288      &0.307  &0.234  &13385  &0.3    &0.318  &12516.648      &0.305  &0.23   \\
&    &0.5    &20233.504      &0.505  &0.153  &20976.56       &0.499  &0.21   &20310.848      &0.503  &0.151  \\
\hline
\multirow{7}{*}{1000} & \multirow{7}{*}{50}   &1e-04  &27.6   &0.003  &0.187  &154.344        &0      &0.229  &244.904        &0.043  &0.243  \\
&   &0.001  &31.32  &0.004  &0.162  &163.584        &0.001  &0.193  &274.416        &0.053  &0.203  \\
&     &0.01   &55.016         &0.014  &0.129  &191.912        &0.01   &0.15   &332.968        &0.075  &0.158  \\
&     &0.1    &275.04         &0.104  &0.084  &403.752        &0.099  &0.095  &580.152        &0.18   &0.101  \\
&     &0.2    &518.92         &0.204  &0.063  &634.128        &0.2    &0.074  &803.496        &0.277  &0.08   \\
&    &0.3    &760.016        &0.304  &0.053  &862.064        &0.3    &0.06   &1016.936       &0.37   &0.063  \\
&    &0.5    &1244.624       &0.503  &0.034  &1313.84        &0.498  &0.039  &1430.144       &0.552  &0.042  \\
\hline
\multirow{7}{*}{1000} & \multirow{7}{*}{200}     &1e-04  &586.24         &0.008  &0.276  &2303.16        &0      &0.395  &855.904        &0.007  &0.279  \\
&    &0.001  &720.152        &0.011  &0.233  &2553.112       &0.001  &0.338  &992.496        &0.01   &0.237  \\
&    &0.01   &1238.992       &0.023  &0.184  &3149.528       &0.01   &0.266  &1494.464       &0.022  &0.186  \\
&  &0.1    &4992.952       &0.118  &0.118  &6753.224       &0.1    &0.171  &5181.096       &0.115  &0.118  \\
&   &0.2    &8946.776       &0.218  &0.093  &10497.4        &0.2    &0.133  &9094.128       &0.215  &0.092  \\
&    &0.3    &12846.408      &0.317  &0.076  &14198.992      &0.3    &0.108  &12970.448      &0.314  &0.074  \\
&    &0.5    &20578.888      &0.513  &0.05   &21534.512      &0.5    &0.07   &20659.576      &0.51   &0.049  \\
    \hline
    \end{tabular}
  \end{center}
 \caption{The false positive rate (FPR; defined as FP/(FP+TN)) and false negative rate (FNR; defined as FN/(TP+FN)) are reported for various values of $q$, the level of desired FPR control.  Results for two values of $n$ and $p$, and for each of Simulations A-C, are reported. The value of $|\ehg|$ --- the number of elements in the GRASS estimate corresponding to the value of $q_n$ --- is also reported.  Results are averaged over 250 simulated data sets.  \label{tab:fprfnr}}
\end{table}

\subsection{Comparison to Existing Approaches}

We now compare the performances of 
 the graphical lasso \citep{friedman2008sparse}, neighborhood selection \citep{meinshausen2006high}, and GRASS on Simulations A-C, with $n=50$ and $p=200$. 
 Results are displayed in Figure~\ref{results:abc}. 
 
 In Simulation B, recall that the sparsity patterns of $\bf \Sigma$ and ${\bf \Sigma}^{-1}$ are identical. In this setting, GRASS outperforms the graphical lasso and neighborhood selection, since it correctly assumes that the sparsity patterns of the covariance and precision matrices are similar.  
 In Simulations A and C, even though this assumption does not hold exactly, the performance of GRASS is quite competitive. 
  
 Overall, in Figure~\ref{results:abc}, there is little difference in performance between GRASS, neighbourhood selection, and the lasso. That is, even though GRASS is an extremely simple approach, in practice GRASS performs competitively with specialized and computationally-intensive procedures for estimating a precision matrix. 
 
 Figure~\ref{results:heatmaps} displays the adjacency matrices corresponding to the true edge set, as well as the edge sets estimated using the graphical lasso and GRASS, for Simulations B and C. We find once again that the edge sets estimated by the graphical lasso and GRASS appear to be quite similar. 
 
 \begin{figure}[h!]
  \centering
  Simulation A \hspace{29mm} Simulation B \hspace{29mm} Simulation C \\
  \vspace{-7mm}
    \includegraphics[width=0.32\textwidth]{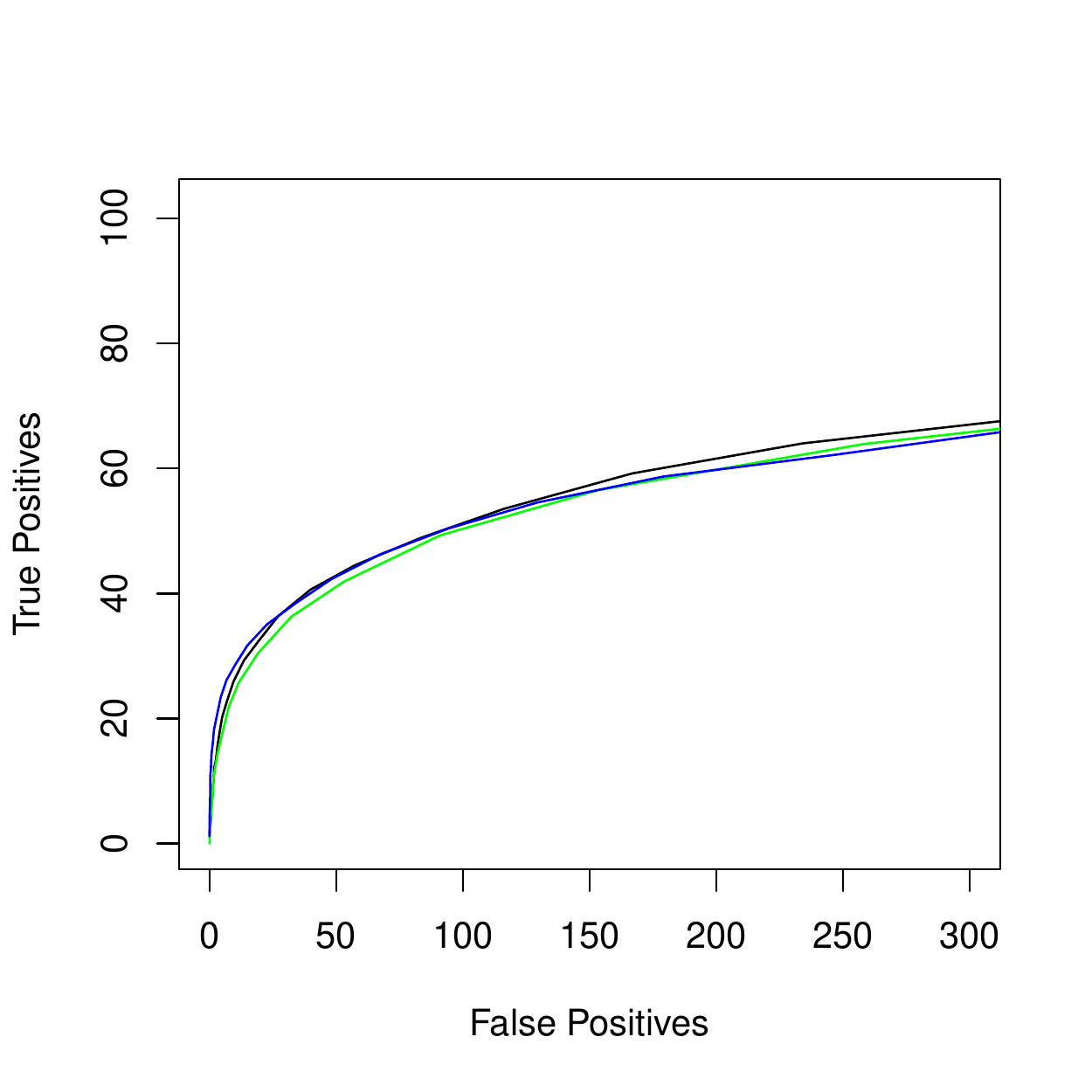}
     \includegraphics[width=0.32\textwidth]{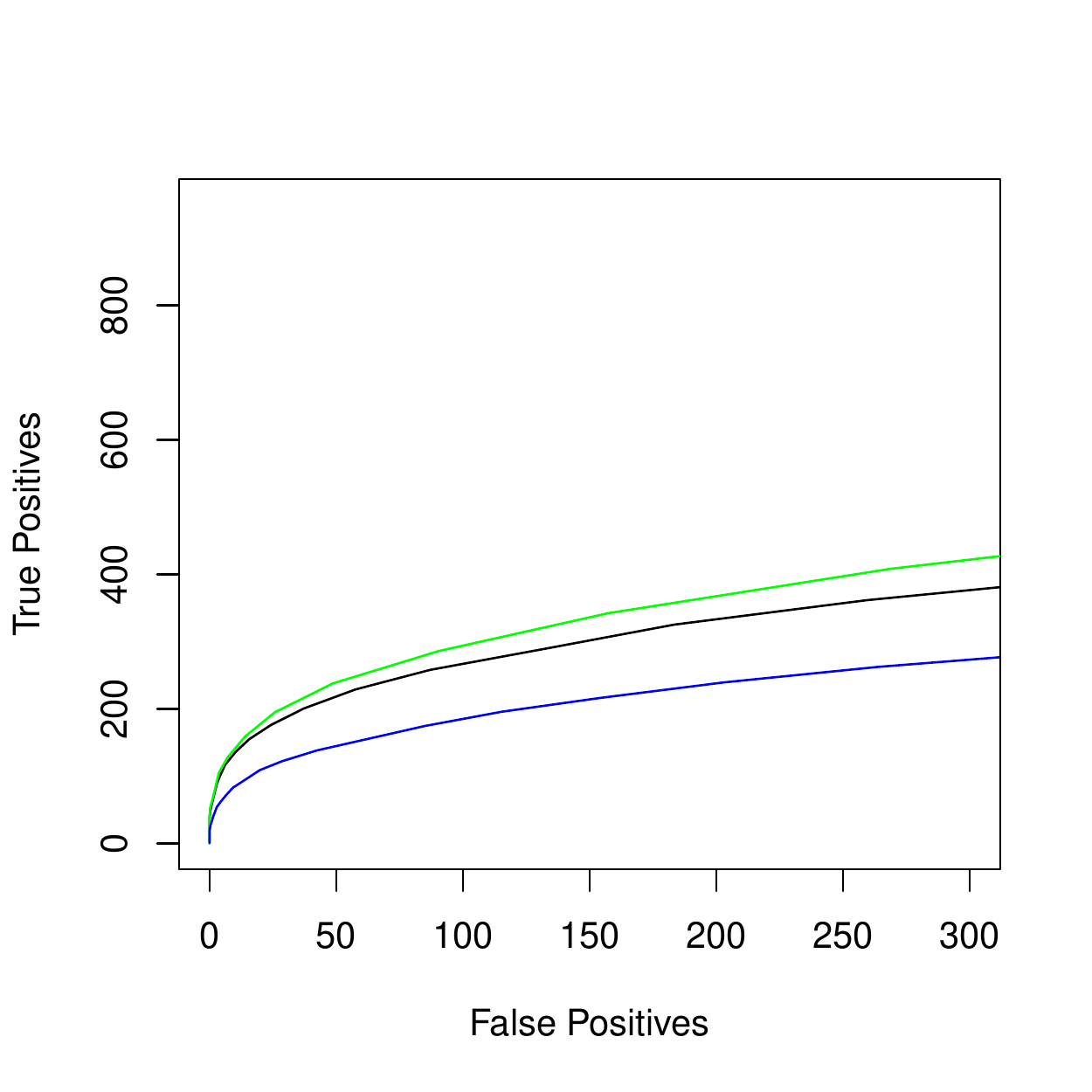}
      \includegraphics[width=0.32\textwidth]{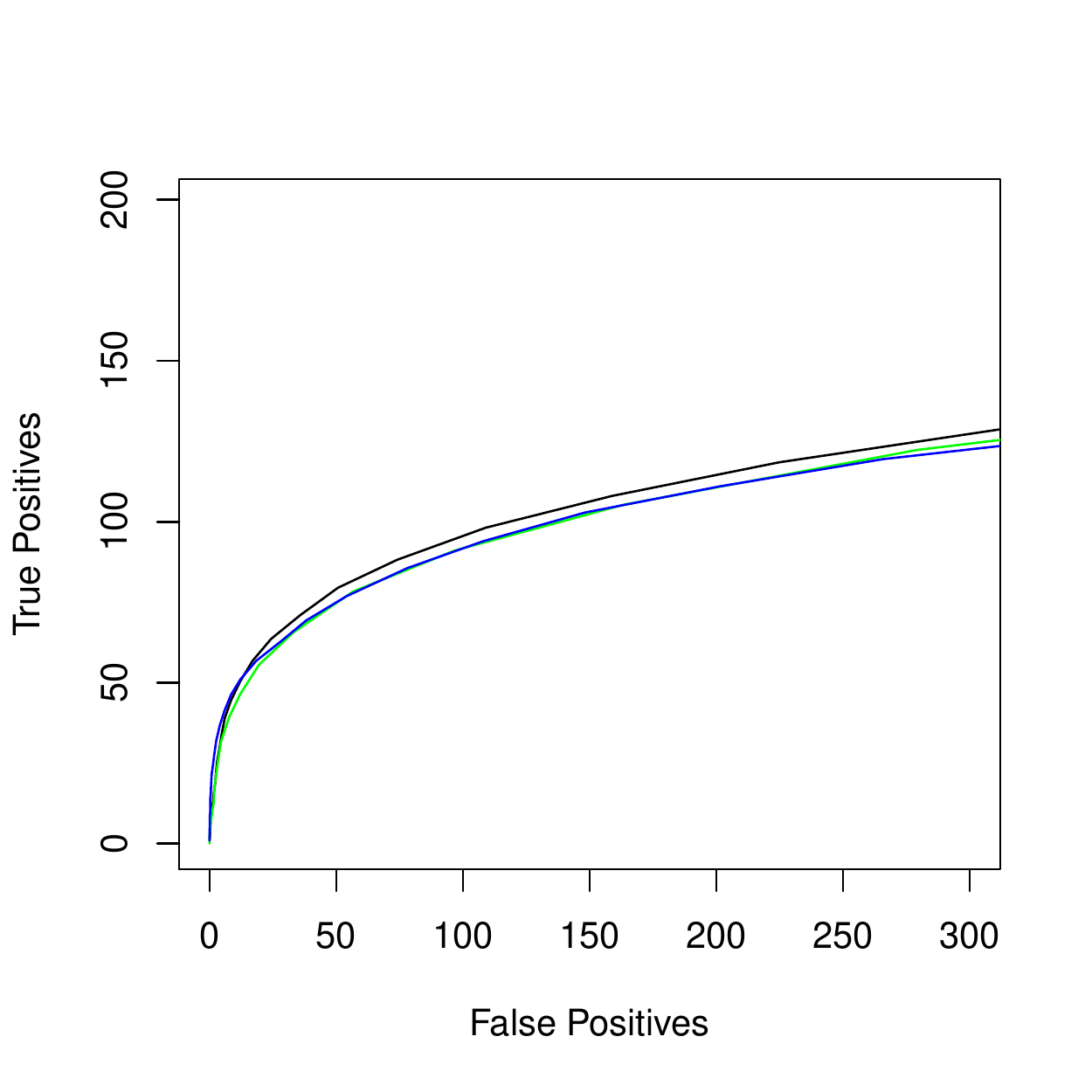}
       \caption{   \label{results:abc} For Simulations A-C with $p=200$ and $n=50$, the number of false and true positive edges detected  is displayed as the tuning parameter is varied.  Results are shown for graphical lasso (\textcolor{black}{---}), neighborhood selection (\textcolor{blue}{---}), and GRASS (\textcolor{green}{---}). }
\end{figure}

 \begin{figure}[h!]
  \centering
  (a) \hspace{44mm} (b) \hspace{44mm} (c)\\
  \vspace{2mm}
    \includegraphics[width=0.32\textwidth]{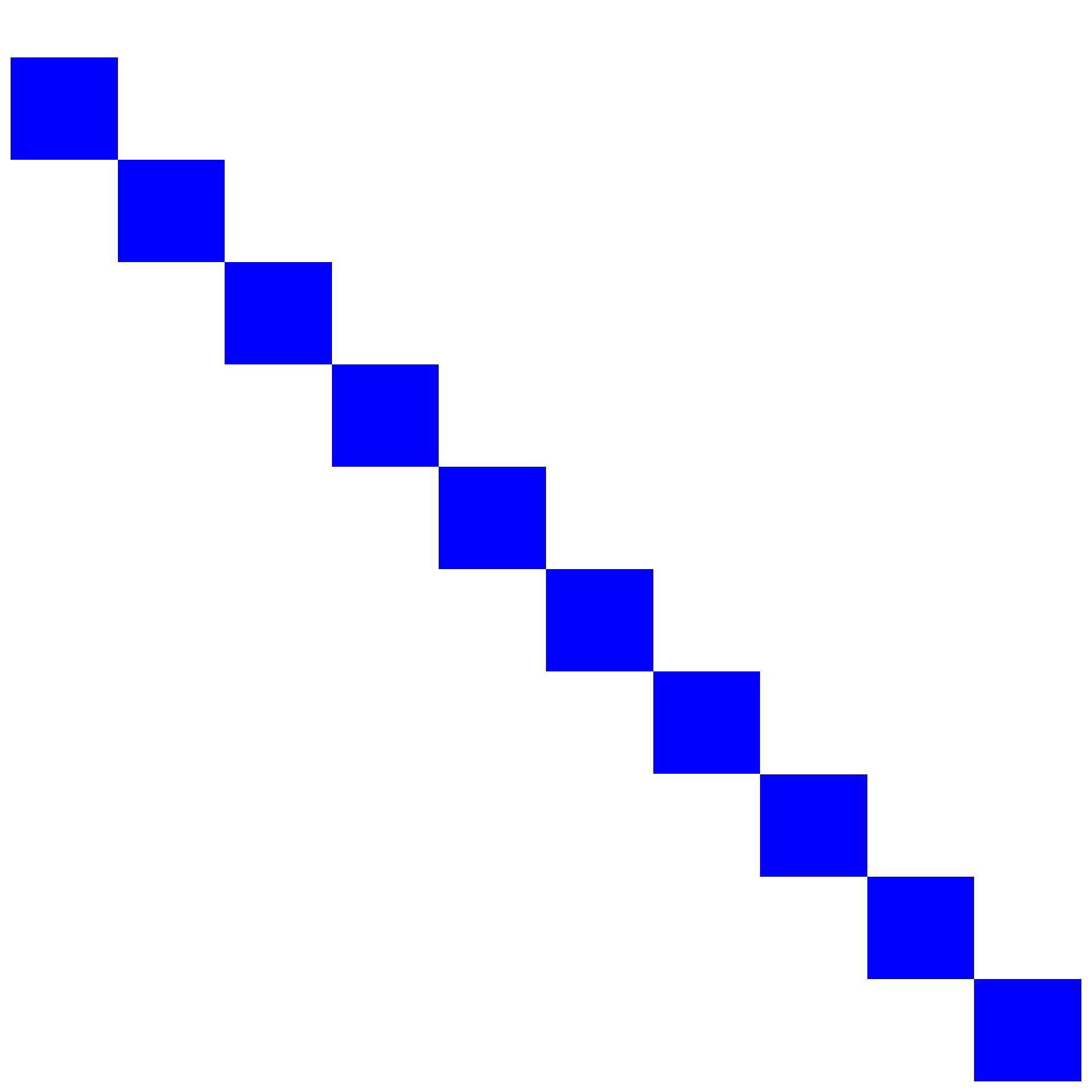}
     \includegraphics[width=0.32\textwidth]{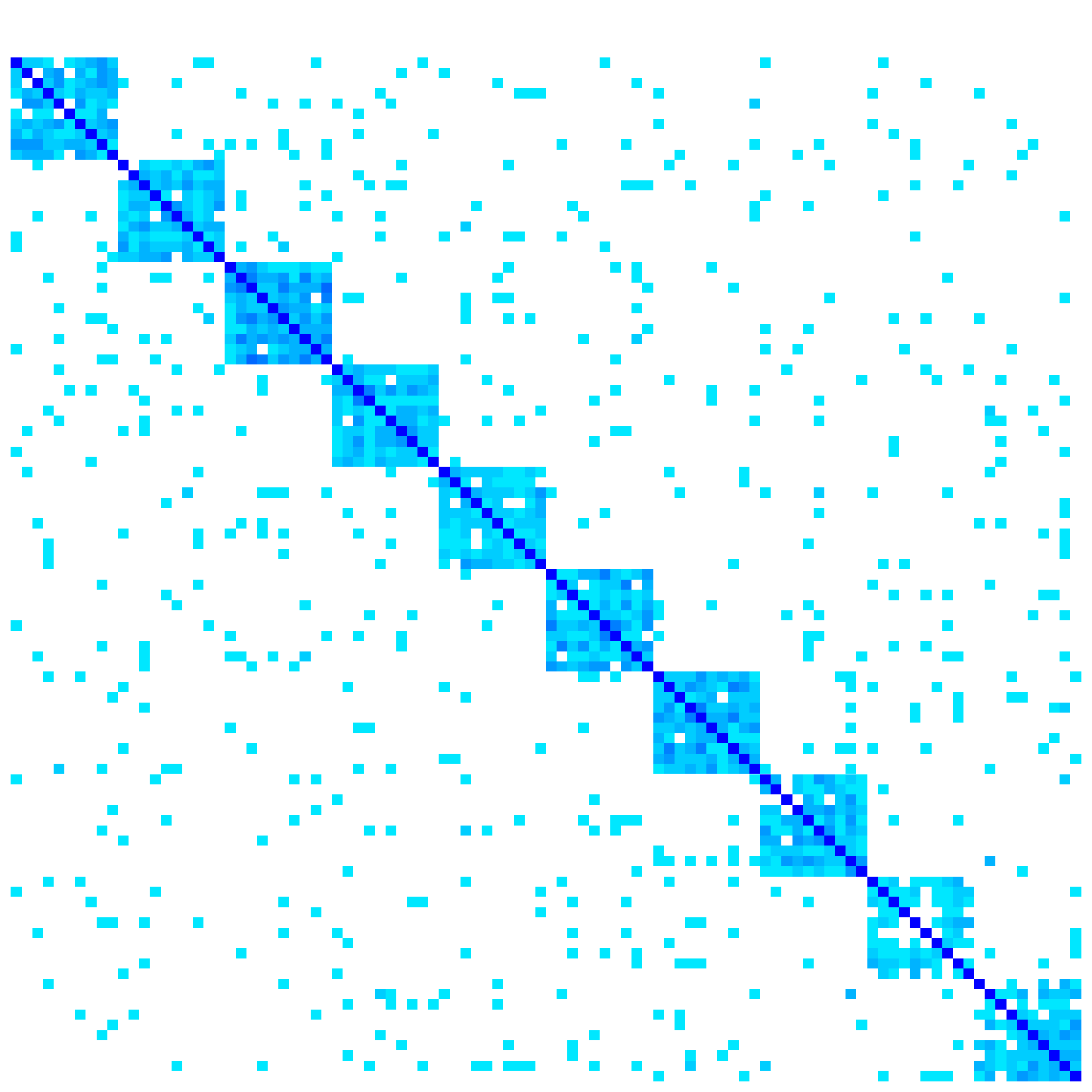}
      \includegraphics[width=0.32\textwidth]{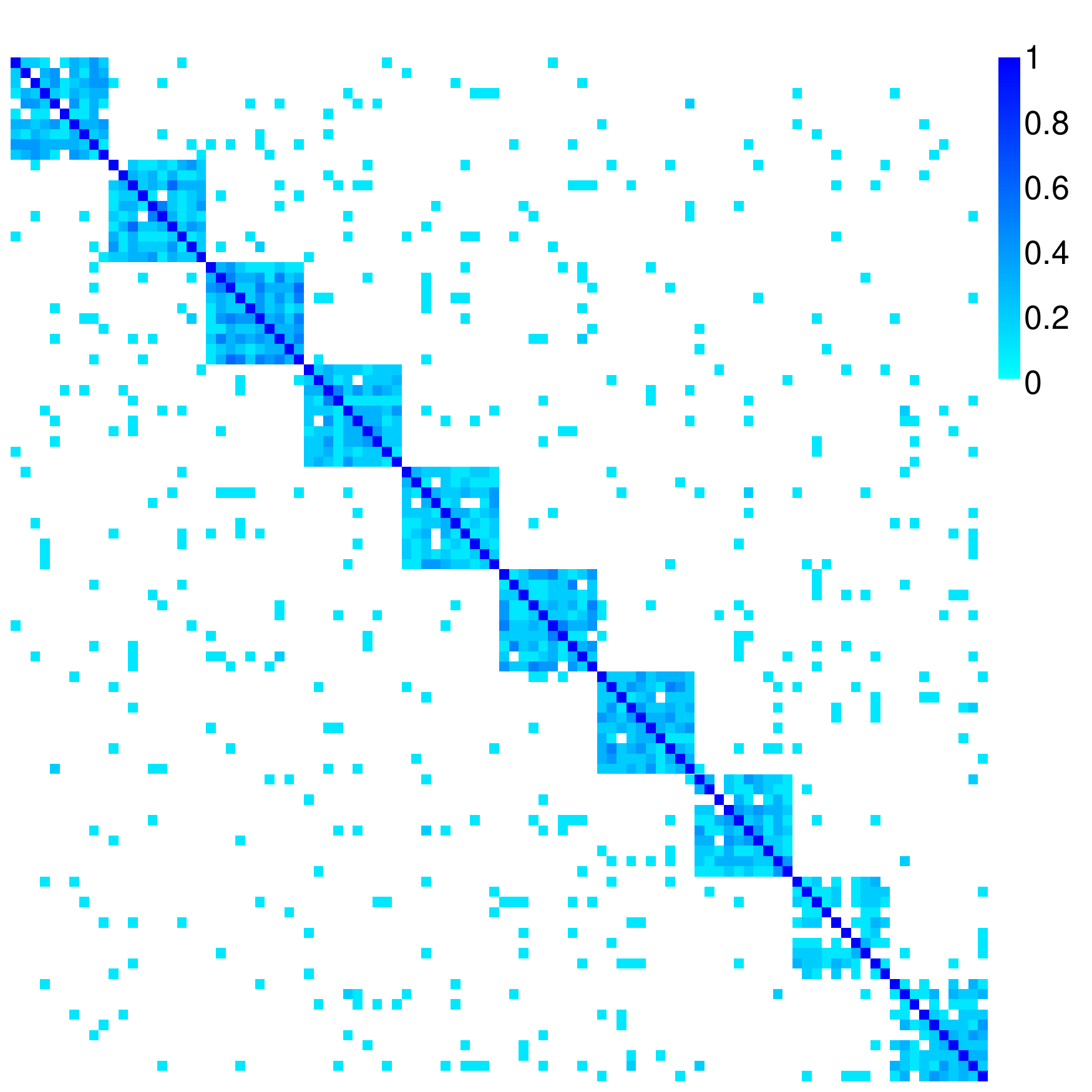}
        (d) \hspace{44mm} (e) \hspace{44mm} (f)\\
  \vspace{2mm}
    \includegraphics[width=0.32\textwidth]{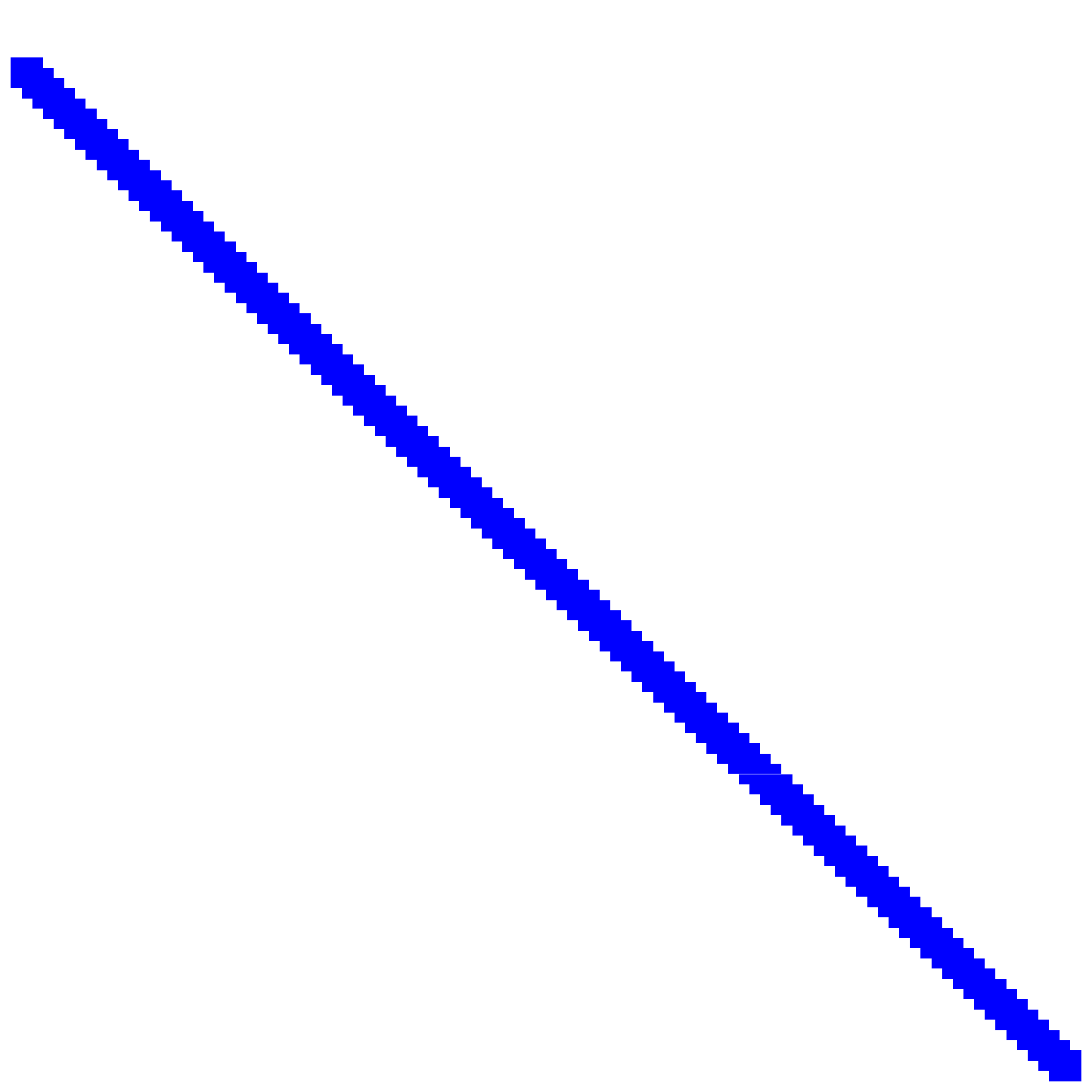}
     \includegraphics[width=0.32\textwidth]{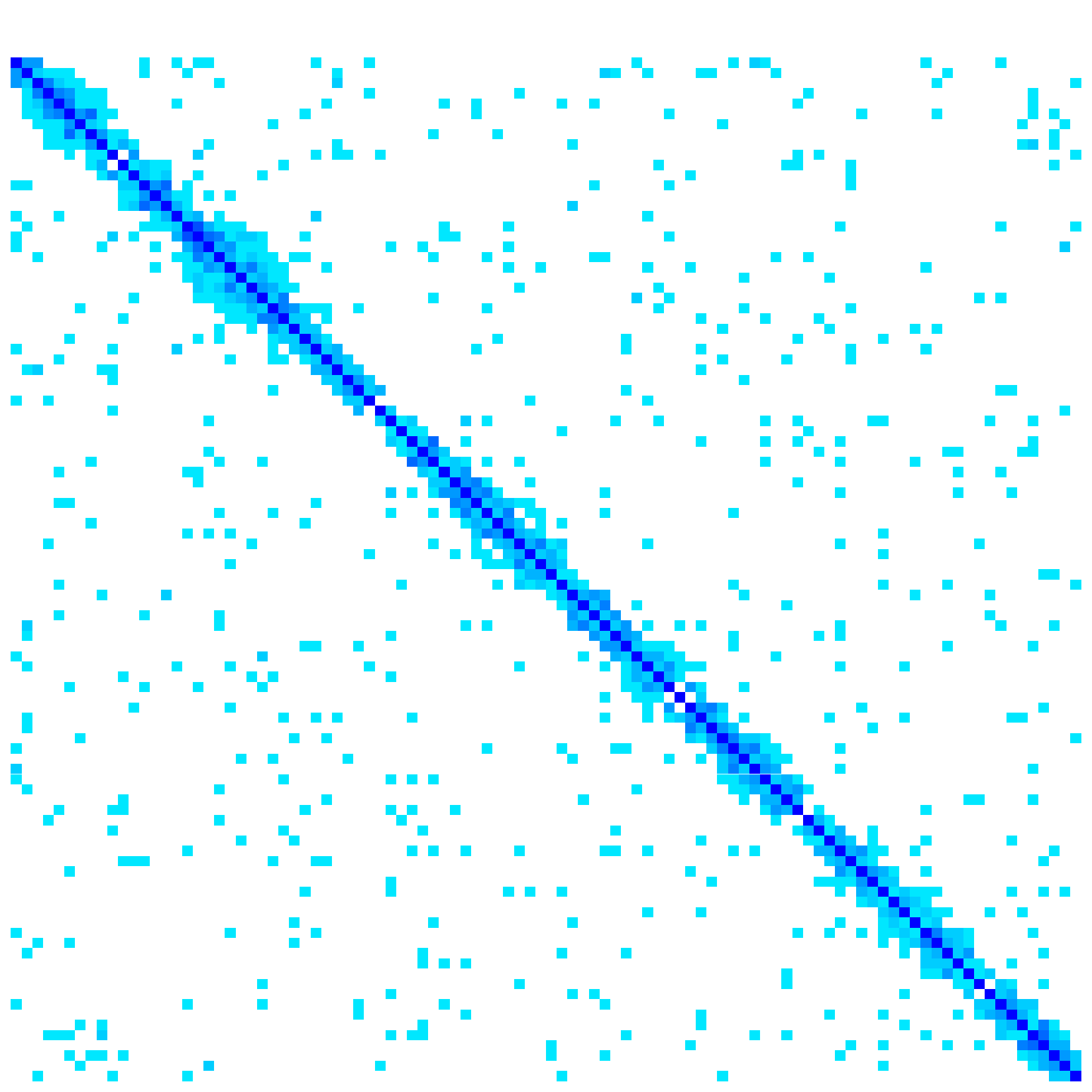}
      \includegraphics[width=0.32\textwidth]{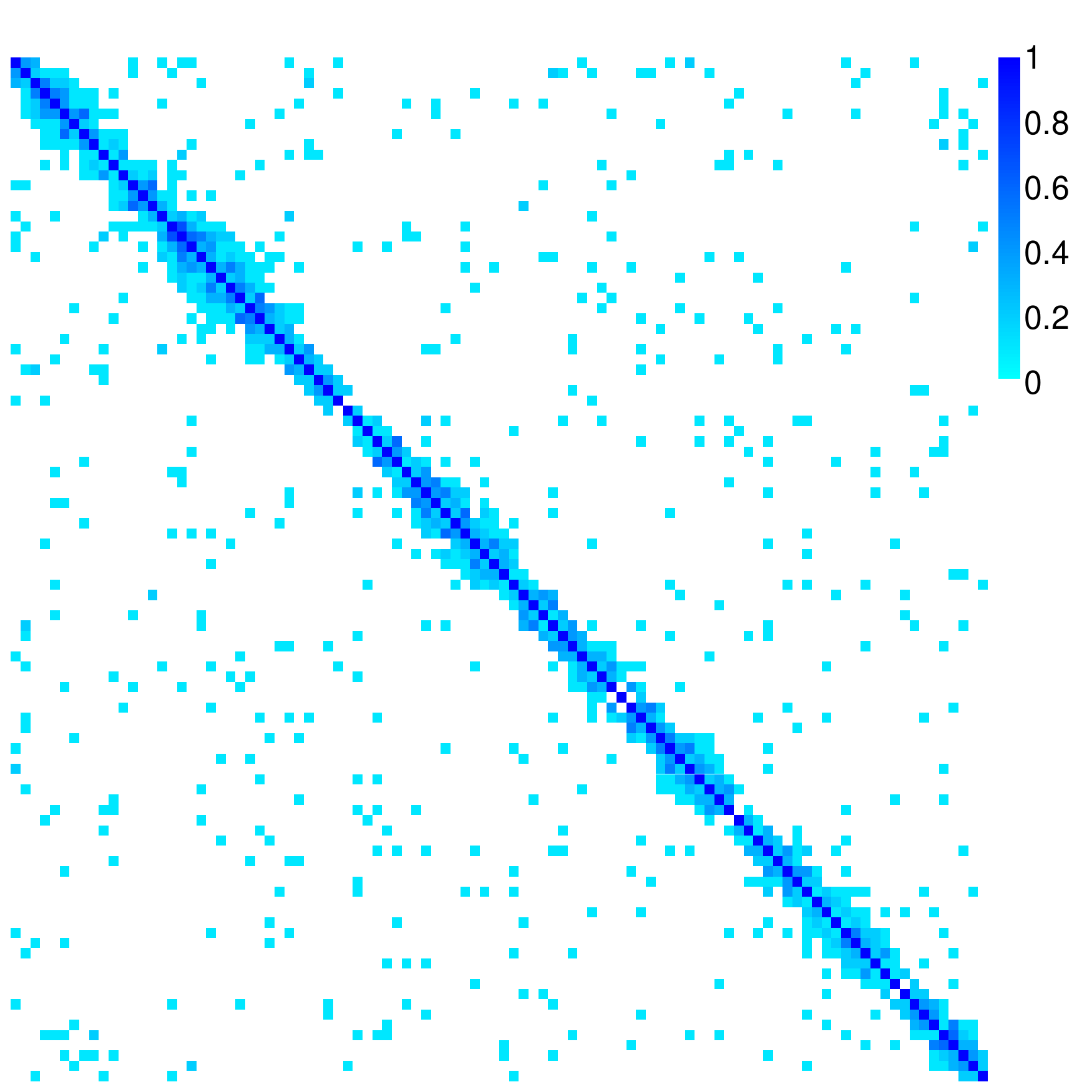}
       \caption{   \label{results:heatmaps} For Simulations B (panels (a)-(c)) and C (panels (d)-(f)) with $p=100$ and $n=50$, the adjacency matrices corresponding to  the true edge set (panels (a) and (d)), the graphical lasso  estimate (panels (b) and (e)), and the GRASS  estimate (panels (c) and (f)) are shown. The adjacency matrices for graphical lasso and GRASS  are averaged over  10 simulated data sets;  the color of a particular cell in the heatmap corresponds to the fraction of these 10 data sets for which the corresponding edge is estimated to be present.  Results for Simulation A are not shown, since in that setting the true edge set is not fixed across the simulated data sets. }
 \end{figure}

\section{Analysis of Gene Expression Data}\label{sec:real}

We examined a gene expression data set from \cite{spira2007airway}, previously studied in \cite{danaher2013joint}, and publicly available from the Gene Expression Omnibus \citep{barrett2007ncbi}  at accession number GDS2771. The data consist of 22,283 microarray-derived gene expression measurements from large airway epithelial cells sampled from 97 patients with lung cancer and 90 controls. We limited our analysis to the 1,778 genes with the highest marginal variance. 
 Each feature was standardized within each class, in order to have mean zero and standard deviation one among the cancer samples, and among the controls. 

Our goal is to compare the performances of GRASS and graphical lasso in terms of edge set recovery. Unfortunately, as is typically the case for a high-dimensional biological data set, the true underlying conditional 
dependence relationships in this data are unknown. In other words, no \emph{gold standard} is available.

Given the absence of a gold standard, we  took the following approach. We split the control samples into two equally-sized sets, Set 1 and Set 2. We applied both graphical lasso and GRASS to each set. 
We refer to the resulting estimated edge sets as $\eonegl$, $\eonesis$, $\etwogl$, and $\etwosis$; tuning parameters  were chosen such that  $|\eonegl|=|\etwogl|=|\eonesis|=|\etwosis|$.
In order to quantify the accuracy of the edges estimated on Set 2 by GRASS and the graphical lasso, we first treated the edges estimated by the graphical lasso on Set 1 as the gold standard, and  then we treated the edges estimated by GRASS on Set 1 as the gold standard. In greater detail, we calculated:
\begin{enumerate}
\item \emph{Accuracy of GRASS on Set 2 when graphical lasso on Set 1 is treated as the gold standard.} This is calculated as
$$ \left| \eonegl \cap \etwosis \cap \left(\etwogl\right)^c \right|.$$
\item \emph{Accuracy of graphical lasso on Set 2 when graphical lasso on Set 1 is treated as the gold standard.} This is calculated as
 $$ \left| \eonegl \cap \etwogl \cap \left(\etwosis\right)^c \right|.$$
\item \emph{Accuracy of GRASS on Set 2 when GRASS  on Set 1 is treated as the gold standard.} This is calculated as
$$ \left| \eonesis \cap \etwosis \cap \left(\etwogl\right)^c \right|.$$
\item \emph{Accuracy of graphical lasso on Set 2 when GRASS on Set 1 is treated as the gold standard.} This is calculated as
 $$ \left| \eonesis \cap \etwogl \cap \left(\etwosis\right)^c \right|.$$
\end{enumerate}
 Note that in calculating these accuracies, we only considered feature pairs $(i,j)$ for which GRASS and graphical lasso disagree over whether an edge is present in Set 2 (since feature pairs for which graphical lasso and GRASS agree on Set 2 are uninformative for our purposes).

The results, averaged over 20 splits of the data into Set 1 and Set 2, are summarized in Table~\ref{tab:realdata}. They indicate that regardless of whether graphical lasso or GRASS  on Set 1 is treated as the gold standard, the results obtained by GRASS on Set 2 have better agreement with the gold standard than do the results obtained by the graphical lasso on Set 2. In other words,  independent data provides greater evidence for edges estimated by GRASS  than for edges estimated by the graphical lasso, regardless of whether the independent data is evaluated using GRASS or  the graphical lasso.

\begin{table}[h!]
\small
  \begin{center}
    \begin{tabular}{c|cc|cc}
    \hline
   & \multicolumn{2}{c|}{GRASS as Gold Standard} & \multicolumn{2}{c}{Graphical Lasso as Gold Standard} \\
    \cline{2-5}
 $|\widehat{\mathcal{E}}|$ &GRASS Accuracy	 &GL Accuracy &   GRASS Accuracy	 & GL Accuracy \\
\hline
 47371.8 (2387.7) &      3368.2 (284.6) &        1663 (185.4) &       2248.7 (85.9) & 1824 (64.5) \\
40781.3 (2346.6) &      2307.4 (226.9) &        1222.8 (154.8) &             1562.5 (68) &   1207 (45.7) \\
33555.9 (2229.2) &      1433.4 (165.4) &        808.3 (119) &         968.5 (48.5) &  714.5 (31.4) \\
25942.8 (2012.8) &      783.4 (108.6) & 472.8 (82.5) &        523.4 (27.3) &  373.1 (17.4) \\
18540.5 (1688.2) &      359.4 (60) &    227 (47.5) &        232.6 (14.9) &  155.4 (8.7) \\
11903.3 (1276.8) &      133.4 (24.6) &  94 (22.1) &          81.6 (5.5) &    58.2 (3.7) \\
6647.9 (834.5) &        41.2 (9.7) &    26.4 (9.1) &        23.6 (2) &      12.7 (1.3) \\
3095.2 (447.3) &        6.2 (2) &       5.5 (1.8) &           2.9 (0.4) &     2.1 (0.3) \\
1142 (184.7) &  1.1 (0.3) &     0.7 (0.5) &      0.5 (0.2) &     0.1 (0.1) \\
312.9 (52.1) &  0.2 (0.1) &     0 (0) &   0 (0) & 0 (0) \\
    \hline
    \end{tabular}
  \end{center}
 \caption{Mean (and standard error) of accuracy  of graphical lasso (GL) and GRASS on gene expression data, over 20 splits of the observations into Set 1 and Set 2. $|\widehat{\mathcal{E}}|$, the size of the estimated edge set, is also reported. Regardless of whether GRASS or graphical lasso is treated as the gold standard, GRASS yields more accurate edge set recovery than does the graphical lasso. These results are based on an analysis of the control observations. Similar results are obtained from the cases (results not shown).  \label{tab:realdata} }
\end{table}

\section{Discussion}\label{sec:disc}
In this paper, we have proposed graphical sure screening (GRASS), a simple and efficient procedure for recovering the   structure of a high-dimensional Gaussian graphical model. GRASS is a natural extension of sure screening approaches from the regression and classification frameworks into the setting of Gaussian graphical modeling.

The theoretical results presented in Section~\ref{sec:theory} for GRASS require a very simple set of assumptions. 
In particular, Assumption~\ref{a1}, which guarantees that the covariance corresponding to an edge in the graph is not too small,  suffices to ensure that GRASS has the sure screening property: that is,  GRASS can dramatically reduce the size of the potential edge set  while still containing the true edge set with probability tending to one. 
Unlike  \citet{ravikumar2011high} and \citet{meinshausen2006high}, no  irrepresentablility conditions  are required. And unlike  \citet{zhou2009adaptive}, no restricted eigenvalue condition is required on the precision matrix. 

The computational advantages of the GRASS framework over existing approaches to estimate a sparse precision matrix are dramatic: while approaches such as the graphical lasso typically require $\mathcal{O}(p^3)$ operations, GRASS requires only $\mathcal{O}(p^2)$ operations.

 In practice, in  order for GRASS to perform well, the non-zero elements of the precision matrix must tend to  be non-zero in the covariance matrix.  We have shown that this assumption typically holds in a range of simulation settings. Furthermore, GRASS outperforms the graphical lasso on a gene expression data set.

\section*{Acknowledgments}
This work was supported in part by NSF Grants DMS 1007698 (R.S.) and DMS CAREER 1252624 (D.W.), NIH Grant 5DP5OD009145 (D.W.), and a Sloan Research Fellowship (D.W.).

\newpage

\appendix
\section{Appendix}

First, we reproduce a result from  \citet{vw96} for the sake of readability.

\begin{lemma}[Bernstein's inequality, Lemma 2.2.11, \citet{vw96}]\label{lem-vw2}
Let $Y_1,\cdots,Y_n$ be independent random variables with zero mean such that $E|Y_i|^m\leq
m!M^{m-2}v_i/2$, for every $m\geq 2$ (and all $i$) and some constants $M$ and $v_i$. Then
$$P\left(|Y_1+\cdots+Y_n|>x\right)\leq 2\exp\{-x^2/(2(v+Mx))\}$$
for $v\geq v_1+\cdots+ v_n$.
\end{lemma}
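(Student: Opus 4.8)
The plan is to prove this by the classical exponential–moment (Chernoff) argument: the hypothesis $E|Y_i|^m \le m!\,M^{m-2} v_i/2$ is tailored precisely so that each $Y_i$ has a moment generating function controlled near the origin, and everything else is a one–parameter optimization.

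First I would fix $\lambda \in (0, 1/M)$ and bound the MGF of a single summand. Expanding the exponential (the interchange of sum and expectation being justified by monotone/dominated convergence via the moment bound itself), using $E[Y_i]=0$ to kill the linear term, and using $E[Y_i^m] \le E|Y_i|^m$ for all $m\ge 2$ (equality for even $m$, the triangle inequality for odd $m$), one gets $E[e^{\lambda Y_i}] \le 1 + \sum_{m\ge 2} \lambda^m E|Y_i|^m/m! \le 1 + \frac{v_i}{2}\sum_{m\ge 2}(\lambda M)^{m-2}\lambda^2$. This geometric series converges exactly because $\lambda M < 1$, giving $E[e^{\lambda Y_i}] \le 1 + \frac{v_i \lambda^2}{2(1-\lambda M)} \le \exp\!\big(\tfrac{v_i \lambda^2}{2(1-\lambda M)}\big)$, the last step by $1+t\le e^t$. (Note the $m=2$ term of the hypothesis already says $E|Y_i|^2\le v_i$, so $v_i$ is playing the role of a variance proxy, as one would expect.)

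Next, by independence, $E\big[e^{\lambda(Y_1+\cdots+Y_n)}\big]=\prod_i E[e^{\lambda Y_i}] \le \exp\!\big(\tfrac{(\sum_i v_i)\lambda^2}{2(1-\lambda M)}\big) \le \exp\!\big(\tfrac{v\lambda^2}{2(1-\lambda M)}\big)$ since $v \ge \sum_i v_i$. Markov's inequality then gives $P(Y_1+\cdots+Y_n > x) \le \exp\!\big(-\lambda x + \tfrac{v\lambda^2}{2(1-\lambda M)}\big)$ for every $\lambda\in(0,1/M)$. The final step is the choice $\lambda = x/(v+Mx)$, which indeed lies in $(0,1/M)$; then $1-\lambda M = v/(v+Mx)$, so a short computation collapses the exponent to $-x^2/\big(2(v+Mx)\big)$. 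Applying the identical argument to $-Y_1,\dots,-Y_n$ — which satisfy the same moment hypothesis — and summing the two one–sided tail bounds produces the stated two–sided inequality with the constant $2$.

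There is no deep obstacle here; the statement is standard. The only real care needed is bookkeeping: justifying the termwise integration on the admissible $\lambda$–range, handling the odd–$m$ moments, and verifying that the specific "magic" value $\lambda = x/(v+Mx)$ (rather than a generic small $\lambda$) is what yields the clean constants in the displayed bound.
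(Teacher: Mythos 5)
Your proof is correct: the Chernoff/moment-generating-function argument with the geometric-series bound $E[e^{\lambda Y_i}]\le \exp\{v_i\lambda^2/(2(1-\lambda M))\}$ for $\lambda M<1$, independence, Markov's inequality, the choice $\lambda = x/(v+Mx)$, and the union of the two one-sided bounds is exactly the classical route. Note that the paper itself gives no proof of this lemma --- it simply reproduces it from \citet{vw96} (Lemma 2.2.11) for readability --- and your argument is essentially the standard one found in that reference, so there is nothing to reconcile; the only cosmetic remark is that the degenerate case $v=0$ (all $Y_i=0$ a.s.) is trivial and can be dispatched separately if one wants the choice $\lambda=x/(v+Mx)$ to lie strictly inside $(0,1/M)$.
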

\bigskip

\bigskip

The following lemma, needed for the proof of Theorem~\ref{thm1}, shows that a $\chi_1^2$-distributed random variable satisfies the moment condition in Lemma \ref{lem-vw2}.
\begin{lemma}\label{lemma1}
   Suppose $Y\sim\chi_1^2$. Then for some constant $C$, we have for all $m\in \mathbb{N}$ that
   $$E|Y-E(Y)|^m \leq Cm!2^m.$$
\end{lemma}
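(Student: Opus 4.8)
The plan is to use the representation $Y = Z^2$ with $Z \sim N(0,1)$, so that $E(Y) = 1$ and the quantity to be bounded is $E|Z^2-1|^m$. Since $Z^2 \ge 0$ we have the pointwise inequality $|Z^2 - 1| \le Z^2 + 1$, and therefore
$$E|Y - E(Y)|^m \le E(Z^2+1)^m.$$
The point of this step is to remove the absolute value, replacing it by an expression whose moments are ordinary Gaussian moments.

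Next I would expand by the binomial theorem, $E(Z^2+1)^m = \sum_{k=0}^m \binom{m}{k} E(Z^{2k})$, and substitute the classical identity $E(Z^{2k}) = (2k-1)!! = \prod_{j=1}^{k}(2j-1)$ (with the convention that the empty product equals $1$ when $k=0$). The key elementary estimate is the term-by-term comparison $(2k-1)!! \le \prod_{j=1}^{k}(2j) = 2^k\, k!$, so that $E(Z^{2k}) \le 2^k\, k!$ and hence $E(Z^2+1)^m \le \sum_{k=0}^m \binom{m}{k} 2^k\, k!$.

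It then remains to sum the series. Writing $\binom{m}{k} k! = m!/(m-k)!$ and re-indexing via $j = m-k$ gives
$$\sum_{k=0}^m \binom{m}{k} 2^k\, k! = m! \sum_{j=0}^{m} \frac{2^{m-j}}{j!} \le m!\, 2^m \sum_{j=0}^{\infty} \frac{(1/2)^j}{j!} = \sqrt{e}\; m!\, 2^m.$$
Combining with the first display yields $E|Y - E(Y)|^m \le \sqrt{e}\, m!\, 2^m$, which establishes the claim with $C = \sqrt{e}$ (any constant at least $\sqrt{e}$ works).

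I do not expect a genuine obstacle here: the argument is entirely elementary once $Y$ is written as $Z^2$ and the absolute value is discarded via $|Z^2-1| \le Z^2+1$. The only points that need (minor) care are the term-by-term bound on the double factorial and the bookkeeping in the final summation, both of which are routine. An alternative route through the moment generating function $E\,e^{t(Y-1)} = e^{-t}(1-2t)^{-1/2}$ for $|t|<1/2$ is also available, but the direct computation above is shorter and self-contained.
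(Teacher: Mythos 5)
Your proof is correct: the pointwise bound $|Z^2-1|\le Z^2+1$, the Gaussian even-moment identity $E(Z^{2k})=(2k-1)!!\le 2^k k!$, and the re-indexed sum $\sum_{k=0}^m \binom{m}{k}2^k k! = m!\,2^m\sum_{j=0}^m 2^{-j}/j! \le \sqrt{e}\,m!\,2^m$ are all accurate, so you get the claim with the explicit constant $C=\sqrt{e}$. The route is genuinely different from the paper's: the paper works directly with the $\chi^2_1$ density, splitting the integral for $E|Y-1|^m$ at $y=1$, bounding $y^{-1/2}\le 1$ on $[1,\infty)$ and $(1-y)^m e^{-y/2}\le 1$ on $[0,1]$, and evaluating $\int_1^\infty (y-1)^m e^{-y/2}\,dy = m!\,2^{m+1}e^{-1/2}$ by a change of variables, which yields the bound $m!2^{m+1}/(e^{1/2}\sqrt{2\pi}) + 2/\sqrt{2\pi}$. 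Your version avoids handling the $y^{-1/2}$ singularity at the origin (absorbed automatically by passing to $Z$) at the cost of invoking the double-factorial moment formula and a binomial expansion; the paper's version avoids any combinatorial bookkeeping but must split the domain to control the density near zero. Both are elementary, give explicit constants, and serve equally well as input to Bernstein's inequality (Lemma~\ref{lem-vw2}), which only needs the $m!\,2^m$ growth rate.
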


\begin{proof}
  For $Y\sim \chi^2_1$, we have that
\begin{eqnarray*}
  E|Y-E(Y)|^m&=&\frac{1}{\sqrt{2\pi}}\int_{0}^{\infty}|y-1|^my^{-1/2}\exp(-y/2)\mathrm{d}y\\
  &=&\frac{1}{\sqrt{2\pi}}\int_{1}^{\infty}(y-1)^my^{-1/2}\exp(-y/2)\mathrm{d}y\\
    &&+\frac{1}{\sqrt{2\pi}}\int_{0}^{1}(1-y)^my^{-1/2}\exp(-y/2)\mathrm{d}y\\
    &\leq& \frac{1}{\sqrt{2\pi}}\int_{1}^{\infty}(y-1)^m\exp(-y/2)\mathrm{d}y
+\frac{1}{\sqrt{2\pi}}\int_{0}^{1}y^{-1/2}\mathrm{d}y,\\
  &=& \frac{m!2^{m+1}}{ {\exp(1/2)} \sqrt{2\pi} } + \frac{2}{\sqrt{2\pi}}.
 \end{eqnarray*}
 This is not greater than $Cm!2^m$ for some constant $C$.
\end{proof}

\noindent {\bf Proof of Theorem 1}
\begin{proof}

To begin, we will show that there exist constants $c_1$ and $c_2$ such that for any $a \neq b$,  
\begin{equation}
P(|\bm{X}_a^T\bm{X}_b/n-\sigma_{ab}| \geq 1/3C_1n^{-\kappa}) \leq c_1\mathrm{exp}(-c_2n^{1-2\kappa}).
\label{star}
\end{equation}
By definition,
  \begin{eqnarray*}
  &&P\left(|\bm{X}_a^T\bm{X}_b/n-\sigma_{ab}| \geq 1/3C_1n^{-\kappa}\right)\\
  &=& P\left(|\sum_{i=1}^n (X_{ia}X_{ib}-\sigma_{ab})| \geq 1/3C_1n^{1-\kappa}\right)\\
  &=&P\left(|\sum_{i=1}^n[(X_{ia}+X_{ib})^2-2(1+\sigma_{ab})]-\sum_{i=1}^n[(X_{ia}-X_{ib})^2-2(1-\sigma_{ab})]| \geq 4/3C_1n^{1-\kappa}\right). 
  \end{eqnarray*}
  This can be bounded above by 
\begin{eqnarray*}
  && P\left(|\sum_{i=1}^n[(X_{ia}+X_{ib})^2-2(1+\sigma_{ab})]| \geq 2/3C_1n^{1-\kappa}\right)+\\
  &&P\left(|\sum_{i=1}^n[(X_{ia}-X_{ib})^2-2(1-\sigma_{ab})]| \geq 2/3C_1n^{1-\kappa}\right)\\
  &=& P\left(|\sum_{i=1}^n(V_i^2-1)| \geq \frac{1/3C_1n^{1-\kappa}}{(1+\sigma_{ab})}\right)+P\left(|\sum_{i=1}^n(W_i^2-1)| \geq \frac{1/3C_1n^{1-\kappa}}{(1-\sigma_{ab})}\right),
  \end{eqnarray*}
  where $V_1,\ldots, V_n, W_1, \ldots, W_n$ are  independent  standard normal random variables. Hence $V_i^2$ and $W_i^2$ follow a  $\chi^2_1$ distribution. Together, Lemmas \ref{lem-vw2} and \ref{lemma1} guarantee that there exists positive constants $c_1$ and $c_2$ such that
  $$P\left(|\bm{X}_a^T\bm{X}_b/n-\sigma_{ab}| \ge 1/3C_1n^{-\kappa}\right) \leq c_1\mathrm{exp}(-c_2n^{1-2\kappa}).$$

Next, we notice that  
\begin{eqnarray*}
P(\mathcal{E}\not\subseteq \widehat{\mathcal{E}}_{\gamma_n}) &=& P(\bigcup_{(a,b) \in \mathcal{E}} \left\{ |\bm{X}_a^T \bm{X}_b/n| < 2/3 C_1 n^{-\kappa} \right\}) \\
& \leq & \sum_{(a,b) \in \mathcal{E}}  P( |\bm{X}_a^T \bm{X}_b/n| < 2/3 C_1 n^{-\kappa}).
\end{eqnarray*}
Now we note that $|\mathcal{E}| < p^2$, and that for $(a,b) \in \mathcal{E}$, Assumption~\ref{a1} implies that 
$$P( |\bm{X}_a^T \bm{X}_b/n| < 2/3 C_1 n^{-\kappa}) \leq P( |\bm{X}_a^T \bm{X}_b/n - \sigma_{ab} | \ge  1/3 C_1 n^{-\kappa}).$$
Hence, 
$$P(\mathcal{E}\not\subseteq \widehat{\mathcal{E}}_{\gamma_n}) \leq p^2 c_1\mathrm{exp}(-c_2n^{1-2\kappa}).$$
This implies the first part of Theorem~\ref{thm1}. The second part of Theorem~\ref{thm1} can be established in a similar way, and hence we omit the details.
\end{proof}

\bigskip
The following lemma will be used in the proof of Theorem~\ref{thm2}. 
\begin{lemma}\label{lemma3}
Let  $X = (X_1,\dots,X_p)^T$ be a $p$-dimensional random vector with mean zero and variance $\Sigma$, and $Y$ be  a random variable with $E(Y)=0$ and $E(Y^2)=1$. Assume that $X$ and $Y$ satisfy the linear model $Y=X^T\beta+\epsilon$, where  $\epsilon$ is uncorrelated with $X$.  
Define
  $$\mathcal{S} = \{j: |E(YX_j)| {>} Cn^{-\kappa}\}$$
  for some  constant $C$. Then $|\mathcal{S}|$, the cardinality of $\mathcal{S}$,  satisfies 
  $$ |\mathcal{S}| \leq C^{-2}n^{2\kappa}\lambda_{\mathrm{max}}(\Sigma).$$
\end{lemma}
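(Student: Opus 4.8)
The plan is to recognize that the vector of marginal covariances $\big(E(YX_1),\dots,E(YX_p)\big)^T$ is exactly $\Sigma\beta$, and then to control its squared Euclidean norm by the ``signal variance'' $\beta^T\Sigma\beta$, which is at most $E(Y^2)=1$. No delicate estimates are needed: the argument is a short chain of linear-algebra inequalities, essentially the population-level calculation that underlies sure independence screening in \citet{fan2008sure}.

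First I would use the linear model together with the fact that $\epsilon$ is uncorrelated with $X$, so $E(X\epsilon)=0$ (there is no centering issue since $X$ has mean zero), to write
$$E(YX) = E\big(X(X^T\beta+\epsilon)\big) = \Sigma\beta,$$
so that $w_j := E(YX_j)$ is the $j$-th coordinate of $\Sigma\beta$. Next I would expand $E(Y^2)$: the cross term $\beta^T E(X\epsilon)$ vanishes, so
$$1 = E(Y^2) = \beta^T\Sigma\beta + E(\epsilon^2) \ \geq\ \beta^T\Sigma\beta.$$

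Then I would bound the total marginal signal. Since $\Sigma$ is positive semidefinite, $\Sigma^2 \preceq \lambda_{\max}(\Sigma)\,\Sigma$ (compare eigenvalues, using $\lambda_i^2\le\lambda_{\max}\lambda_i$), and hence
$$\sum_{j=1}^p w_j^2 = \|\Sigma\beta\|_2^2 = \beta^T\Sigma^2\beta \ \leq\ \lambda_{\max}(\Sigma)\,\beta^T\Sigma\beta \ \leq\ \lambda_{\max}(\Sigma).$$
Finally, by the definition of $\mathcal{S}$, every index $j\in\mathcal{S}$ satisfies $w_j^2 > C^2 n^{-2\kappa}$, so
$$C^2 n^{-2\kappa}\,|\mathcal{S}| \ <\ \sum_{j\in\mathcal{S}} w_j^2 \ \leq\ \sum_{j=1}^p w_j^2 \ \leq\ \lambda_{\max}(\Sigma),$$
which rearranges to $|\mathcal{S}| \leq C^{-2} n^{2\kappa}\lambda_{\max}(\Sigma)$, as claimed.

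There is no genuinely hard step here; the only point requiring any care is the operator inequality $\Sigma^2 \preceq \lambda_{\max}(\Sigma)\Sigma$, which relies on $\Sigma$ being a bona fide covariance matrix. Everything else is bookkeeping with the hypotheses $E(Y)=0$, $E(Y^2)=1$, and $\mathrm{Cov}(X,\epsilon)=0$. To connect this with Theorem~\ref{thm2} one would apply the lemma with $Y = X_a$ and $C = \tfrac13 C_1$, regress $X_a$ on $X_{\Gamma\setminus\{a\}}$ to obtain such a linear model, use Assumption~\ref{a2} to replace $\lambda_{\max}(\Sigma)$ by $C_2 n^\tau$, and combine with the deviation bound (\ref{star}) to translate the bound on $|\mathcal{S}|$ into a bound on $|\widehat{\mathcal{E}}_{a,\gamma_n}|$ on a high-probability event.
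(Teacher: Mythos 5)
Your proposal is correct and follows essentially the same route as the paper's proof: identify $E(YX)=\Sigma\beta$, bound $\|\Sigma\beta\|_2^2 \leq \lambda_{\max}(\Sigma)\,\beta^T\Sigma\beta \leq \lambda_{\max}(\Sigma)$ via the variance decomposition $\mathrm{Var}(Y)=\mathrm{Var}(X^T\beta)+\mathrm{Var}(\epsilon)$, and then count the coordinates exceeding $Cn^{-\kappa}$. The only cosmetic difference is that you phrase the key inequality as $\Sigma^2\preceq\lambda_{\max}(\Sigma)\Sigma$ while the paper writes it through $\Sigma^{1/2}\beta$; these are the same calculation.
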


\begin{proof}
{Left-multiplying both sides by $X$ and taking the expectation, } 
$\beta = \Sigma^{-1}E(XY)$. Therefore $E(YX_j) =  (\Sigma \beta)_j$, the $j$th element of the vector $\Sigma \beta$. 
Consequently, 
\begin{equation}
\mathcal{S}=\{j: |(\Sigma \beta)_j |>Cn^{-\kappa}\} = \{j: (\Sigma \beta)_j^2 >C^2n^{-2\kappa}\} . \label{newstar}
\end{equation} 
Furthermore, 
$$||\Sigma\beta||_2^2 = (\Sigma^{1/2}\beta)^T\Sigma(\Sigma^{1/2}\beta) \leq \lambda_{\mathrm{max}}(\Sigma)||\Sigma^{1/2}\beta||_2^2 = \lambda_{\mathrm{max}}(\Sigma)\beta^T\Sigma\beta.$$ Moreover, recalling that $X$ and $\epsilon$ are uncorrelated, we have that $$\beta^T\Sigma\beta = \mbox{Var}(X^T\beta) = \mbox{Var}(Y)-\mbox{Var}(\epsilon) \leq 1.$$ Thus,  we conclude that $||\Sigma\beta||_2^2 \leq \lambda_{\mathrm{max}}(\Sigma)$. By (\ref{newstar}), this implies that $|\mathcal{S}| \leq \lambda_{\max}(\Sigma) / (C^2n^{-2\kappa}) = C^{-2}n^{2\kappa}\lambda_{\mathrm{max}}(\Sigma)$.

\end{proof}

\bigskip

\noindent {\bf Proof of Theorem 2}

\begin{proof}
Let $$\mathcal{S}_a = \{b: b\not=a,  |\sigma_{ab}|\ge 1/3C_1n^{-\kappa} \}$$ and 
$$\mathcal{T}_{a,\gamma_n} = \bigcap_{b: b\not=a} \{ |{\bf X}_a^T{\bf X}_b/n-\sigma_{ab}|\leq 1/3C_1n^{-\kappa} \}.$$

 By definition,  $\widehat{\mathcal{E}}_{a,\gamma_n} = \{b: b\neq a, |\bm{X}_a^T \bm{X}_b/n| > 2/3C_1n^{-\kappa}\}$. Then on the set $\mathcal{T}_{a,\gamma_n}$, if $b$ belongs to $\widehat{\mathcal{E}}_{a,\gamma_n}$, it has to belong to $\mathcal{S}_a$. Thus, we conclude that $P(\widehat{\mathcal{E}}_{a,\gamma_n} \subseteq \mathcal{S}_a) \geq P(\mathcal{T}_{a,\gamma_n})$. Moreover, an argument similar to that in the proof of Theorem \ref{thm1} can be used to show that $$P(\mathcal{T}_{a,\gamma_n}) \geq 1 - C_4\mathrm{exp}(-C_5n^{1-2\kappa}).$$ This implies that $$P(\widehat{\mathcal{E}}_{a,\gamma_n} \subseteq \mathcal{S}_a) \geq 1 - C_4\mathrm{exp}(-C_5n^{1-2\kappa}).$$
  Finally, applying 
   {Lemma \ref{lemma3} in conjunction with Assumption~\ref{a2}} yields the desired result.
\end{proof}

\noindent {\bf Proof of Theorem 3}
\begin{proof}

First, we will show that the assumptions of Theorem~\ref{thm1} are satisfied, so that the sure screening property applies. We must simply show that the new threshold, 
$\gamma_n = \Phi^{-1} (1- \frac{f}{p(p-1)}) / \sqrt{n}$,
is no greater than $2/3C_1n^{-\kappa}$, the threshold used in Theorem~\ref{thm1}. In other words, we must show that 
\begin{equation}
\frac{f}{p(p-1)} \geq 1-\Phi(2/3C_1n^{1/2-\kappa}). \label{wts}
\end{equation}
From the fact that 
$$1-\Phi(x) \leq \frac{1}{\sqrt{2\pi}}x^{-1}\exp(-x^2/2),$$ we have that $1-\Phi(2/3C_1n^{1/2-\kappa}) {\leq} C_7n^{-1/2+\kappa}\exp(-C_8n^{1-2\kappa})$.
Furthermore, {since $\log(p)=C_3 n^\xi$}, we have that  $\frac{f}{p(p-1)} {\geq} C_9\exp(-C_{10}n^{\xi})$.   Using the fact that $\xi < 1-2\kappa$, (\ref{wts}) follows directly.

Next, we show that using a threshold value of $\gamma_n = \Phi^{-1} (1- \frac{f}{p(p-1)}) / \sqrt{n}$ leads to control of the asymptotic {expected} false positive rate at $f/[p(p-1)]$. 
Recall that the false positive rate is defined as
\begin{eqnarray*}
  \mbox{fpr}_n = \frac{1}{|\mathcal{E}^c|} \sum_{(a,b)\not\in \mathcal{E}}{\bf 1}\left(|\frac{\bm{X}_a^T\bm{X}_b}{n}| > \gamma_n\right).
\end{eqnarray*}
Because $\mbox{E}(\bm{X}_a^T\bm{X}_b/n) = \sigma_{ab}$ and $\mbox{Var}(\bm{X}_a^T\bm{X}_b/n) = \frac{1 + \sigma_{ab}^2}{n}$, it follows that 
$$
\sqrt{n}\left(\frac{\bm{X}_a^T\bm{X}_b}{n}-\sigma_{ab}\right)/\sqrt{1+\sigma_{ab}^2} {\rightarrow_d} N(0,1).
$$
Furthermore, for any $(a,b)\not\in \mathcal{E}$, we have
\begin{eqnarray*}
&&P\left(|\frac{\bm{X}_a^T\bm{X}_b}{n}| > \gamma_n\right) = P\left(\frac{\sqrt{n}(\frac{\bm{X}_a^T\bm{X}_b}{n}-\sigma_{ab})}{\sqrt{1+\sigma_{ab}^2}} > \frac{\sqrt{n}\gamma_n-\sqrt{n}\sigma_{ab}}{\sqrt{1+\sigma_{ab}^2}}\right) \\
&& + P\left(\frac{\sqrt{n}(\frac{\bm{X}_a^T\bm{X}_b}{n}-\sigma_{ab})}{\sqrt{1+\sigma_{ab}^2}} < -\frac{\sqrt{n}\gamma_n+\sqrt{n}\sigma_{ab}}{\sqrt{1+\sigma_{ab}^2}}\right)\\
&& = 1-\Phi\left(\frac{\sqrt{n}\gamma_n-\sqrt{n}\sigma_{ab}}{\sqrt{1+\sigma_{ab}^2}}\right) + 1- \Phi\left(\frac{\sqrt{n}\gamma_n+\sqrt{n}\sigma_{ab}}{\sqrt{1+\sigma_{ab}^2}}\right) \\
&& \asymp 2-2\Phi(\sqrt{n}\gamma_n) = 2f/[p(p-1)]. 
\end{eqnarray*}
where the  asymptotic equivalence in the previous line follows from the fact that $\sqrt{n} \gamma_n = \Phi^{-1}(1-f/[p(p-1)]) $ is of the same order as $n^{\frac{\xi}{2}}$, 
combined with Assumption~\ref{a3}.

Consequently, the expectation of $\mbox{fpr}_n$ is controlled as desired,
\begin{eqnarray*}
  \mbox{E}(\mbox{fpr}_n) &=&  \frac{1}{|\mathcal{E}^c|} \sum_{(a,b)\not\in \mathcal{E}} P\left(|\frac{\bm{X}_a^T\bm{X}_b}{n}| > \gamma_n\right) \\
   &\asymp& \frac{\sum_{(a,b)\not\in \mathcal{E}} 2f / [p(p-1)] }{|\mathcal{E}^c|} =  2f/[p(p-1)] \leq f/|\mathcal{E}^c|,
\end{eqnarray*}
where the last inequality is due to the fact that $|\mathcal{E}^c| \leq p(p-1)/2$.

\end{proof}

\newpage

\bibliographystyle{apalike}
\bibliography{prop-ref}

\begin{thebibliography}{}

\bibitem[Barrett et~al., 2007]{barrett2007ncbi}
Barrett, T., Troup, D.~B., Wilhite, S.~E., Ledoux, P., Rudnev, D., Evangelista,
  C., Kim, I.~F., Soboleva, A., Tomashevsky, M., and Edgar, R. (2007).
\newblock {NCBI} {GEO}: mining tens of millions of expression profiles --
  database and tools update.
\newblock {\em Nucleic Acids Research}, 35(suppl 1):D760--D765.

\bibitem[Cai et~al., 2011]{cai2011constrained}
Cai, T., Liu, W., and Luo, X. (2011).
\newblock A constrained l1 minimization approach to sparse precision matrix
  estimation.
\newblock {\em Journal of the American Statistical Association}, 106(494).

\bibitem[Cai et~al., 2012]{cai2012estimating}
Cai, T.~T., Liu, W., and Zhou, H.~H. (2012).
\newblock Estimating sparse precision matrix: Optimal rates of convergence and
  adaptive estimation.
\newblock {\em arXiv preprint arXiv:1212.2882}.

\bibitem[Danaher et~al., 2013]{danaher2013joint}
Danaher, P., Wang, P., and Witten, D.~M. (2013).
\newblock The joint graphical lasso for inverse covariance estimation across
  multiple classes.
\newblock {\em Journal of the Royal Statistical Society: Series B (Statistical
  Methodology)}.

\bibitem[Fan et~al., 2011]{fan2011nonparametric}
Fan, J., Feng, Y., and Song, R. (2011).
\newblock Nonparametric independence screening in sparse ultra-high-dimensional
  additive models.
\newblock {\em Journal of the American Statistical Association}, 106(494).

\bibitem[Fan and Lv, 2008]{fan2008sure}
Fan, J. and Lv, J. (2008).
\newblock Sure independence screening for ultrahigh dimensional feature space.
\newblock {\em Journal of the Royal Statistical Society: Series B (Statistical
  Methodology)}, 70(5):849--911.

\bibitem[Fan et~al., 2009]{fan2009ultrahigh}
Fan, J., Samworth, R., and Wu, Y. (2009).
\newblock Ultrahigh dimensional feature selection: beyond the linear model.
\newblock {\em The Journal of Machine Learning Research}, 10:2013--2038.

\bibitem[Fan and Song, 2010]{fan2010sure}
Fan, J. and Song, R. (2010).
\newblock Sure independence screening in generalized linear models with
  np-dimensionality.
\newblock {\em The Annals of Statistics}, 38(6):3567--3604.

\bibitem[Friedman et~al., 2008]{friedman2008sparse}
Friedman, J., Hastie, T., and Tibshirani, R. (2008).
\newblock Sparse inverse covariance estimation with the graphical lasso.
\newblock {\em Biostatistics}, 9(3):432--441.

\bibitem[Hall and Miller, 2009]{hall2009using}
Hall, P. and Miller, H. (2009).
\newblock Using generalized correlation to effect variable selection in very
  high dimensional problems.
\newblock {\em Journal of Computational and Graphical Statistics}, 18(3).

\bibitem[Hall et~al., 2009]{hall2009tilting}
Hall, P., Titterington, D., and Xue, J.-H. (2009).
\newblock Tilting methods for assessing the influence of components in a
  classifier.
\newblock {\em Journal of the Royal Statistical Society: Series B (Statistical
  Methodology)}, 71(4):783--803.

\bibitem[Lam and Fan, 2009]{lam2009sparsistency}
Lam, C. and Fan, J. (2009).
\newblock Sparsistency and rates of convergence in large covariance matrix
  estimation.
\newblock {\em Annals of Statistics}, 37(6B):4254.

\bibitem[Lauritzen, 1996]{lauritzen1996graphical}
Lauritzen, S.~L. (1996).
\newblock {\em Graphical Models}.
\newblock Oxford University Press.

\bibitem[Li et~al., 2012]{li2012feature}
Li, R., Zhong, W., and Zhu, L. (2012).
\newblock Feature screening via distance correlation learning.
\newblock {\em Journal of the American Statistical Association},
  107(499):1129--1139.

\bibitem[Mardia et~al., 1980]{mardia1980multivariate}
Mardia, K.~V., Kent, J.~T., and Bibby, J.~M. (1980).
\newblock Multivariate analysis.

\bibitem[Mazumder and Hastie, 2012]{mazumder2012exact}
Mazumder, R. and Hastie, T. (2012).
\newblock Exact covariance thresholding into connected components for
  large-scale graphical lasso.
\newblock {\em The Journal of Machine Learning Research}, 13(1):781--794.

\bibitem[Meinshausen and B{\"u}hlmann, 2006]{meinshausen2006high}
Meinshausen, N. and B{\"u}hlmann, P. (2006).
\newblock High-dimensional graphs and variable selection with the lasso.
\newblock {\em The Annals of Statistics}, pages 1436--1462.

\bibitem[Ravikumar et~al., 2009]{ravikumar2009sparse}
Ravikumar, P., Lafferty, J., Liu, H., and Wasserman, L. (2009).
\newblock Sparse additive models.
\newblock {\em Journal of the Royal Statistical Society: Series B (Statistical
  Methodology)}, 71(5):1009--1030.

\bibitem[Ravikumar et~al., 2011]{ravikumar2011high}
Ravikumar, P., Wainwright, M.~J., Raskutti, G., and Yu, B. (2011).
\newblock High-dimensional covariance estimation by minimizing l1-penalized
  log-determinant divergence.
\newblock {\em Electronic Journal of Statistics}, 5:935--980.

\bibitem[Rothman et~al., 2008]{rothman2008sparse}
Rothman, A.~J., Bickel, P.~J., Levina, E., and Zhu, J. (2008).
\newblock Sparse permutation invariant covariance estimation.
\newblock {\em Electronic Journal of Statistics}, 2:494--515.

\bibitem[Spira et~al., 2007]{spira2007airway}
Spira, A., Beane, J.~E., Shah, V., Steiling, K., Liu, G., Schembri, F., Gilman,
  S., Dumas, Y.-M., Calner, P., Sebastiani, P., et~al. (2007).
\newblock Airway epithelial gene expression in the diagnostic evaluation of
  smokers with suspect lung cancer.
\newblock {\em Nature Medicine}, 13(3):361--366.

\bibitem[Sun and Zhang, 2012]{sun2012sparse}
Sun, T. and Zhang, C.-H. (2012).
\newblock Sparse matrix inversion with scaled lasso.
\newblock {\em arXiv preprint arXiv:1202.2723}.

\bibitem[van~der Vaart and Wellner, 1996]{vw96}
van~der Vaart, A.~W. and Wellner, J.~A. (1996).
\newblock {\em Weak Convergence and Empirical Processes}.
\newblock Springer, New York.

\bibitem[Witten et~al., 2011]{witten2011new}
Witten, D.~M., Friedman, J.~H., and Simon, N. (2011).
\newblock New insights and faster computations for the graphical lasso.
\newblock {\em Journal of Computational and Graphical Statistics},
  20(4):892--900.

\bibitem[Yuan, 2010]{yuan2010high}
Yuan, M. (2010).
\newblock High dimensional inverse covariance matrix estimation via linear
  programming.
\newblock {\em The Journal of Machine Learning Research}, 99:2261--2286.

\bibitem[Yuan and Lin, 2007]{yuan2007model}
Yuan, M. and Lin, Y. (2007).
\newblock Model selection and estimation in the {G}aussian graphical model.
\newblock {\em Biometrika}, 94(1):19--35.

\bibitem[Zhao and Li, 2012]{zhao2012principled}
Zhao, S.~D. and Li, Y. (2012).
\newblock Principled sure independence screening for {C}ox models with
  ultra-high-dimensional covariates.
\newblock {\em Journal of Multivariate Analysis}, 105(1):397--411.

\bibitem[Zhou et~al., 2009]{zhou2009adaptive}
Zhou, S., van~de Geer, S., and B{\"u}hlmann, P. (2009).
\newblock Adaptive lasso for high dimensional regression and gaussian graphical
  modeling.
\newblock {\em arXiv preprint arXiv:0903.2515}.

\bibitem[Zhu et~al., 2011]{zhu2011model}
Zhu, L.-P., Li, L., Li, R., and Zhu, L.-X. (2011).
\newblock Model-free feature screening for ultrahigh-dimensional data.
\newblock {\em Journal of the American Statistical Association}, 106(496).

\end{thebibliography}

\end{document}